\documentclass[twoside]{article} 
\usepackage[accepted]{aistats2019}
\usepackage{amsmath}
\usepackage{amsthm}
\usepackage{algorithm, algorithmic}
\usepackage{csquotes}

\usepackage{xcolor}

\usepackage{amssymb}
\usepackage{color}
\usepackage{url}
\usepackage{enumerate}
\usepackage{sidecap}

\usepackage[utf8]{inputenc} 
\usepackage[T1]{fontenc}    
\usepackage{amsfonts}       
\usepackage{nicefrac}       
\usepackage{microtype}      

\usepackage{lmodern}  

 \newtheorem{theorem}{Theorem}[section]

\newtheorem{lemma}[theorem]{Lemma}

\newtheorem{definition}[theorem]{Definition}

\allowdisplaybreaks

\newcommand{\lo}{[n]}
\newcommand{\spl}[1]{X^{#1}}
\newcommand{\cst}[2]{c^{#1}_{#2}}
\newcommand{\gra}[2]{g^{#1}_{#2}}
\newcommand{\rwd}[2]{r^{#1}_{#2}}
\newcommand{\lt}[1]{\mu^{#1}(\spl{#1})}
\newcommand{\pos}{\mathcal{P}}
\newcommand{\we}[2]{\omega^{#1}_{#2}}
\newcommand{\lr}[1]{\eta^{#1}}

\newcommand{\wv}[1]{\gamma_{#1}}
\newcommand{\fun}[2]{f^{#1}_{#2}}
\newcommand{\tf}[1]{h^{#1}}
\newcommand{\der}[1]{\nabla{#1}}
\newcommand{\fp}{S}
\newcommand{\fpv}[1]{\phi_{#1}}

\newcommand{\id}[1]{\mathcal{I}\left(#1\right)}
\newcommand{\ex}[1]{\mathbb{E}\left(#1\right)}
\newcommand{\pr}[1]{\mathbb{P}\left(#1\right)}

\newcommand{\s}[2]{\sigma(#1,#2)}

\newcommand{\pd}[2]{\partial_{#1}{#2}}

\newcommand{\cc}{C}

\newcommand{\prj}[2]{\Pi_{#1}(#2)}

\newcommand{\nd}{D}
\newcommand{\dd}{d}
\newcommand{\ev}{E}
\newcommand{\pp}[1]{\rho_{#1}}
\newcommand{\pe}[1]{\ev_{#1}}
\newcommand{\pf}[1]{\ev_{#1}}

\newcommand{\nt}[1]{\neg{#1}}

\newcommand{\sus}{Z}

\newcommand{\sqn}[1]{\|#1\|}

\newcommand{\mr}{\hat{r}}
\newcommand{\mc}{\hat{c}}

\newcommand{\plr}[1]{\hat{\eta}^{#1}}

\newcommand{\fc}[1]{z_{#1}}
\newcommand{\ub}{\beta}
\newcommand{\pa}[1]{\Omega_{#1}}

\newcommand{\ca}{\tau}
\newcommand{\cb}{\delta}
\newcommand{\nm}[1]{\zeta^t_{#1}}
\newcommand{\nem}{\Gamma}
\newcommand{\tw}[2]{\we{#1}{#2}}
\newcommand{\al}[2]{\xi^t_{#1,#2}}
\newcommand{\sz}[1]{\pi^t_{#1}}

\newcommand{\gb}{s}
\newcommand{\dis}{\alpha}

\newcommand{\dpr}[2]{\hat{\mu}_{#2}^{#1}}

\newcommand{\rv}[1]{\boldsymbol{r}^{#1}}
\newcommand{\cv}[1]{\boldsymbol{c}^{#1}}
\newcommand{\wev}[1]{\boldsymbol{\omega}^{#1}}
\newcommand{\bv}{\boldsymbol{z}}
\newcommand{\gv}[1]{\boldsymbol{g}^{#1}}
\newcommand{\yv}[1]{\boldsymbol{y}^{#1}}
\newcommand{\gav}{\boldsymbol{\gamma}}
\newcommand{\pv}{\boldsymbol{\phi}}
\newcommand{\bx}{\boldsymbol{x}}

\newcommand{\bpc}[1]{\boldsymbol{c}^{#1+}}
\newcommand{\bnc}[1]{\boldsymbol{c}^{#1-}}
\newcommand{\cpc}[2]{{c}^{#1+}_{#2}}
\newcommand{\cnc}[2]{{c}^{#1-}_{#2}}

\begin{document}

\twocolumn[
\aistatstitle{$\textsc{MaxHedge:}$ Maximising a Maximum Online}
\aistatsauthor{ Stephen Pasteris \And Fabio Vitale \And Kevin Chan}
\aistatsaddress{ University College London \\ London, UK \\ s.pasteris@cs.ucl.ac.uk\\ \And Sapienza University\\ Italy \& INRIA Lille, France \\ fabio.vitale@inria.fr \And Army Research Lab\\  Adelphi, MD, USA \\ kevin.s.chan.civ@mail.mil}
\aistatsauthor{Shiqiang Wang \And Mark Herbster}
\aistatsaddress{IBM Research\\ Yorktown Heights, NY, USA\\ wangshiq@us.ibm.com\And University College London \\ London, UK \\ m.herbster@cs.ucl.ac.uk} 
\runningauthor{Stephen Pasteris,  Fabio Vitale,  Kevin Chan,  Shiqiang Wang,  Mark Herbster}
]


\begin{abstract} 
We introduce a new online learning framework where, at each trial, the learner is required to select a subset of actions from a given known action set. Each action is associated with an energy value, a reward and a cost. The sum of the energies of the actions selected cannot exceed a given energy budget. The goal is to maximise the cumulative {\em profit}, where the profit obtained on a single trial is defined as the difference between the {\em maximum reward} among the selected actions and the {\em sum of their costs}. Action energy values and the budget are known and fixed. All rewards and costs associated with each action change over time and are revealed at each trial only after the learner's selection of actions. Our framework encompasses several online learning problems where the environment changes over time; and the solution trades-off between minimising the costs and maximising the {\em maximum} reward of the selected subset of actions, while being constrained to an action energy budget. 
The algorithm that we propose is efficient and general that may be specialised to multiple natural online combinatorial problems.
\end{abstract}
\section{Introduction}

In this paper we propose a novel online framework 
where learning proceeds in a sequence of trials and the goal is to select, at each trial, a {\em subset of actions}
maximising a profit while taking into account a certain constraint. More precisely, we are given a finite set of actions enumerated from $1$ to $n$. Each action is associated with three values: (i) a {\em cost}, (ii) an amount of {\em energy} (both of which are required to perform the given action), as well as (iii) a {\em reward}. Both the cost and the reward associated with each action may change over time, are unknown at the beginning of each trial, and their values are all revealed after the learner's selection. The energy associated with each action is instead known by the learner and does not change over time. At each trial, the learner is required to select a subset of actions such that the {\em sum} of their energies does not exceed a {\em fixed energy budget}. The goal of the learner is to maximise the cumulative {\em profit}, where the profit obtained on a single trial is defined as the difference between the {\em maximum reward} among the selected actions and the {\em sum of their costs}. We denote by $T$ the total number of trials.

Our framework is general and flexible in the sense that it encompasses several online learning problems. In the general case, the main challenge lies in the fact that the rewards are not known at the beginning of each trial, and the learner's profit depends only on the {\em maximum} reward among the selected subset of actions, instead of the {\em sum} of all their rewards. In particular, it is worth mentioning three different problems which can be seen as special cases of our online learning framework; these are variants of the Facility Location problem, the $0$-$1$ Knapsack problem, and the Knapsack Median problem.

 When all action energy values are equal to $0$, we obtain an online learning variant of the Facility Location problem (see, e.g., \cite{Cornuejols90}, \cite{Shmoys01}, \cite{Laoutaris07}). 
 The goal of this specific problem may be viewed as 
selecting and opening a subset of facilities at each given trial, to service
a sequence of users which arrive one at a time.
At any given trial $t$, each action's cost may be interpreted as the cost for opening a facility for the $t$-th user. The reward associated with a given facility represents 
what the user potentially gains when it is opened. More specifically, the rewards can be seen as quantities dependent on the distance between the user and the facilities in some metric space. In this context, it is reasonable to assume that the profit obtained on a single trial depends {\em only on the maximum reward} among the opened facilities 
taking into account the facility costs.  
This represents a natural setting for the Facility Location problem, because in practical 
scenarios users may arrive sequentially and each arrival requires a connection to an open facility.  Online versions of the Facility Location problem have been studied in several contexts (see, e.g., \cite{Cygan18}, \cite{Meyerson01}). However, as far as we are aware, our work is the first study of this specific online setting for the Facility Location
 problem. Moreover, this dynamic model is natural and interesting within this context\footnote{Similar arguments hold also for motivating the other two special cases mentioned in this section.}, because in practice the location of the next user, which in turn determines the reward associated with each facility, is often unknown to the learner. At each trial, after the user's location is known, the rewards are then revealed, because disclosing the user's location enables to compute all distances between the facilities and the user, which are previously unknown. In fact, this corresponds to assuming all rewards are revealed at the end of the trial. 
 
When, instead, all rewards are equal to $0$ and all costs are negative, the problem can be seen as an online learning variant of the $0$-$1$ Knapsack problem (see, e.g., \cite{Dantzig57}, \cite{Martello77}). In this case, each action corresponds to an item whose weight is equal to the associated energy, the energy budget represents the knapsack capacity, and the absolute value of each action cost can be viewed as the corresponding item value. Our formulation makes the problem challenging especially because the item values are revealed only after the learner's selection.

Finally, when all costs are equal to $0$, we obtain an online learning variant of the Knapsack Median problem (\cite{Charikar99}, \cite{Kumar12}). In this problem (which is a generalization of the k-median problem -- see, e.g., \cite{Charikar99b}, \cite{Jain01}), we are given a set of clients and facilities, as well as a distance metric. The goal is to open a subset of facilities such that the total connection cost (distance to nearest open facility) of all clients is minimized, while the sum of the open facility weights is limited by a budget threshold. In our framework, at each trial the rewards can express the closeness of each facility, and the action energy budget represents therefore the threshold of sum of the opened facility weights.

\smallskip

This problem has practical applications in multiple domains. In computer networks, network administrators may want to understand where to place network monitors or intrusion detection systems. Network packets or malicious attacks are related to the events playing a crucial role in this scenario, and a limited amount of network resources are available to detect or observe network behavior. Another class of application examples include municipal emergency services. A service center needs to deploy responders (police, paramedics, fire rescue), and with limited resources, personnel must be deployed sparingly.
A further application is related to deploying program instances in a distributed computing environment (e.g. distributed cloud). These systems must respond to user requests and are subject to constraints. Both costs and rewards may be unknown in practical real-world scenarios at the beginning of each trial.

\smallskip

For our general problem we propose and rigorously analyse a very scalable algorithm called $\textsc{MaxHedge}$ based on the complex interplay between satisfying the energy budget constraint and bounding the profit by a concave function, which in turn is related to the online gradient descent algorithm. Moreover, the total time required per trial by our learning strategy is quasi-linear in $n$.
We measure the performance of proposed solution with respect to the difference between its cumulative profit and a discounted cumulative profit of the best fixed subset of actions.

In summary, our framework captures several real-world problems, where the environment changes
over time and the solutions trade-off between minimising the costs and maximising the {\em maximum} reward among the selected subset of actions, while being constrained to an action energy budget.
The framework is very general and the proposed algorithm is very efficient and may be specialised to several natural online combinatorial problems.  Finally we provide a guarantee on the rewards achieved and costs incurred as compared to the best fixed subset of actions.

\subsection{Related Work}
The closest work to our online learning framework is perhaps addressed in \cite{Koolen10}, where the authors describe an online learning algorithm for structured concepts that are formed by components. Each component and each concept can be respectively seen as an action and a feasible subset of actions. 
Despite several similarities, the algorithm they proposed cannot be used when the rewards are non-zero, because we focus on the {\em maximum} reward among the selected subsets of actions, whereas in \cite{Koolen10} the profit corresponds to the {\em sum} of the rewards of all selected components/actions. When all rewards are instead equal to $0$, then we have the online variant of the $0$-$1$ Knapsack problem described above as one of the three special cases of our general problem. In Appendix \ref{knapsack} we prove that if the algorithm presented in \cite{Koolen10} could handle the Knapsack problem, then the classical version of the Knapsack problem could be solved in polynomial time, which therefore implies that it cannot address this problem unless $P=NP$. 

The Hedge Algorithm described in \cite{Freund97} obtains a regret linear in $n$. 
However, as we have exponentially many possible subsets of selected actions, a vanilla application of Hedge would require an exponential amount of time and space to solve our problem.

Another class of problems and algorithms that are not far from ours is represented by online decision problems where efficient strategies use, as a subroutine, an approximation algorithm for choosing a concept to maximise an inner product (\cite{Fujita13}, \cite{Kakade07}, \cite{Kalai05}). Again, these learning strategies cannot handle the case of non-zero reward. They also have a significantly higher time complexity than $\textsc{MaxHedge}$ in the case of zero reward.



In \cite{Chen18} , \cite{Golovin14}, and \cite{Streeter08}, the authors address the problem of online maximisation of non-negative monotone submodular functions. Although our profit is submodular, it is not necessarily either non-negative or monotone. However, as we shall show in Appendix \ref{sm}, we could, for the facility location special case, combine much of the mechanics of our paper with the second algorithm of \cite{Chen18}, essentially doing gradient ascent with the exact expected profit instead of an approximate expected profit (as is done in $\textsc{MaxHedge}$). Even though this new algorithm could be as efficient as the one presented in our paper, its theoretical guarantees are worse. Note that in the Knapsack and Knapsack Median special cases, the profit is indeed monotone and non-negative. For these special cases, the approach presented in \cite{Streeter08} comes close to solving the problem, but only guarantees that the expectation of the total energy does not exceed the budget rather than the actual total energy. In addition, for the Knapsack Median problem, \cite{Streeter08}  uses quadratic time and space. As far as we are aware, there does not exist any trivial reduction to use \cite{Chen18} or \cite{Golovin14} for these special cases.



Finally, in \cite{Hazan12} the authors address a problem of online minimisation of submodular function. Our paper maximises, instead of minimising, a submodular function (the profit). This is a very different problem.


Some of the techniques behind the development of $\textsc{MaxHedge}$ were inspired by \cite{ServicePlacement}.

\section{Preliminaries and Problem Setup}

\subsection{Preliminaries}
Vectors in $\mathbb{R}^n$ will be indicated in bold. Given a vector $\boldsymbol{v}\in\mathbb{R}^n$  we define $v_i$ to be its $i$-th component. Given vectors $\boldsymbol{v}, \boldsymbol{x}\in\mathbb{R}^n$ define $\langle\boldsymbol{v}, \boldsymbol{x}\rangle:=\sum_{i=1}^nv_ix_i$. Define $\pos$ to be the set of $n$-dimensional real vectors in which every component is non-negative. Given a closed convex set $\cc\subseteq\mathbb{R}^n$ and a vector $\boldsymbol{x}\in\mathbb{R}^n$, we define $\prj{\cc}{\boldsymbol{x}}$ as the projection (under the Euclidean norm) of $\boldsymbol{x}$ onto $\cc$. Let $\mathbb{N}$ be the set of the positive integers. 
For $l\in\mathbb{N}$ define $[l]=\{1, 2, 3, ..., l\}$. Given an event $E$ let $\nt E$ be the event that $E$ does {\em not} occur. Let $\pr{E}$ be the probability that event $E$ occurs. For a random variable $Y$, let $\ex{Y}$ be the expected value of $Y$. Given a differentiable function $h:\mathbb{R}^n\rightarrow\mathbb{R}$ and a vector $\boldsymbol{x}\in\mathbb{R}^n$ let $\nabla h(\boldsymbol{x})$ be the derivative of $h$ evaluated at $\boldsymbol{x}$. Let $\partial_i h(\boldsymbol{x})$ be the $i$-th component of $\nabla h(\boldsymbol{x})$.
\subsection{Problem Setup}\label{s:problemsetup}
In this section we formally define our problem. We have a set of $n$ actions enumerated from $1$ to $n$. Each action $i$ has an \textit{energy} $\fc{i}\in[0,\ub]$ for some $\ub<1$, and on each trial $t$ each action $i$ has a \textit{cost} $\cst{t}{i}\in\mathbb{R}$ (which can be negative) and a \textit{reward} $\rwd{t}{i}\in\mathbb{R}^+$. The learner knows $\bv$, but $\cv{t}$ and $\rv{t}$ are revealed to the learner only at the end of trial $t$. On each trial $t$ the learner has to select a set $\spl{t}\subseteq[n]$ of actions, such that the total energy $\sum_{i\in\spl{t}}\fc{i}$ of the selected actions is no greater than $1$. In selecting the set $\spl{t}$, the learner pays a cost equal to $\sum_{i\in\spl{t}}\cst{t}{i}$. On each trial $t$ the learner then receives the maximum reward $\max_{i\in\spl{t}}\rwd{t}{i}$, over all actions selected (defined as equal to zero if $\spl{t}$ is empty). Hence, the \textit{profit} obtained by the learner on trial $t$ is equal to $\max_{i\in\spl{t}}\rwd{t}{i}-\sum_{i\in\spl{t}}\cst{t}{i}$.
\newline
\newline
Formally, this online problem can be defined as follows: We have a vector $\bv\in\pos$ known to the learner. On trial $t$:
\begin{enumerate}
\item Nature selects vectors $\cv{t}\in\mathbb{R}^n$ and $\rv{t}\in\pos$ (but does not reveal these vectors to learner)
\item Learner selects a set $\spl{t}\subseteq[n]$ with $\sum_{i\in\spl{t}}\fc{i}\leq1$
\item Learner obtains profit:
$$\lt{t}:=\max_{i\in\spl{t}}\rwd{t}{i}-\sum_{i\in\spl{t}}\cst{t}{i}$$
\item $\cv{t}$ and $\rv{t}$ are revealed to the learner.
\end{enumerate}

In this paper we write, for a trial $t$, the cost vector $\cv{t}$ as the sum $\bpc{t}+\bnc{t}$ where $\cpc{t}{i}:=\max\{0,\cst{t}{i}\}$ and $\cnc{t}{i}:=\min\{0,\cst{t}{i}\}$, i.e. $\bpc{t}$ and $\bnc{t}$ are the positive and negative parts of the cost vector, respectively.

In order to bound the cumulative profit of our algorithm we, for some $\dis,\cb\in[0,1]$, some set $\fp\subseteq[n]$ and some trial $t$, define the $(\dis,\cb)$-{\em discounted} profit $\dpr{t}{\dis,\cb}(\fp)$ as:

$$\dpr{t}{\dis,\cb}(\fp):=\dis\max_{i\in\fp}\rwd{t}{i}-\dis\sum_{i\in\fp}\cnc{t}{i}-\cb\sum_{i\in\fp}\cpc{t}{i}$$

which would be the profit obtained on trial $t$ if we selected the subset of actions $\fp$, and all the rewards and negative costs were multiplied by $\dis$ and all positive costs were multiplied by $\cb$.

In this paper we provide a randomised quasi-linear time (per trial) algorithm $\textsc{MaxHedge}$ that, for any set $\fp\subseteq[n]$ with $\sum_{i\in\fp}\fc{i}\leq1$, obtains an expected cumulative profit bounded below by:
$$\ex{\sum_{t=1}^T\lt{t}}\geq\sum_{t=1}^T\dpr{t}{\dis,\cb}(\fp)-n\sqrt{2T}\cb(\mr+\mc)$$
where $\cb:=\left(1-\sqrt{\ub}\right)^2$, $\dis:=1-\exp\left(-\left(1-\sqrt{\ub}\right)^2\right)$, $\mr:=\max_{t\in[T], i\in[n]}\rwd{t}{i}$ and $\mc:=\max_{t\in[T], i\in[n]}|\cst{t}{i}|$.

This paper is structured as follows. In Section \ref{algosec} we introduce the algorithms that define $\textsc{MaxHedge}$. In Section \ref{s:feasibility} we prove that the sets of actions selected by $\textsc{MaxHedge}$ are feasible. In Section \ref{s:profit} we prove the above bound on the cumulative profit.  In Section \ref{SCsec} we give special cases of the general problem.

\section{Algorithms}\label{algosec}

We now present our learning strategy $\textsc{MaxHedge}$, describing the two subroutines \enquote{Algorithm 1} and \enquote{Algorithm 2} (see the pseudocode below). $\textsc{MaxHedge}$ maintains a vector $\wev{}\in\cc$ where $\cc:=\{\boldsymbol{x}\in[0,1]^n:\langle\boldsymbol{x},\bv\rangle\leq1\}$. We define $\wev{t}$ to be the vector $\wev{}$ at the start of trial $t$. We initialise $\wev{1}\leftarrow \boldsymbol{0}$. On trial $t$ $\textsc{MaxHedge}$ (randomly) constructs $\spl{t}$ from $\wev{t}$ using Algorithm $1$. After receiving $\rv{t}$ and $\cv{t}$ $\textsc{MaxHedge}$ updates $\wev{}$ (from $\wev{t}$ to $\wev{t+1}$) using Algorithm $2$. Algorithm $2$ also uses a \textit{\enquote{learning rate}} $\plr{t}$ which is defined from $\plr{t-1}$. We define $\plr{0}:=\infty$.

Algorithm $1$ operates with a partition of all possible actions and, for each set in this partition, Algorithm $1$ draws  a certain subset of actions from it. Given a set in the partition, the number of actions drawn from it and the probability distribution governing the draws depends on $\ub$ and $\wev{t}$. The subset, $\spl{t}$, of actions selected by Algorithm $1$ satisfies the following three crucial properties (proved in sections~\ref{s:feasibility} and \ref{s:profit}):
\begin{itemize}
\item The total energy of all actions selected is no greater than $1$.
\item Given an arbitrary set $\sus\subseteq[n]$, the probability that $\spl{t}$ and $\sus$ intersect is lower bounded by $1-\exp\left(-\cb\sum_{i\in \sus}\we{t}{i}\right)$.
\item Given an action $i$, the probability that it is selected on trial $t$ is upper bounded by $\cb\we{t}{i}$.
\end{itemize}
In the analysis we shall construct, from $\rv{t}$ and $\cv{t}$, a convex function $\tf{t}:\cc\rightarrow\mathbb{R}$. Using the second and third properties (given above) of Algorithm $1$, we show that $\tf{t}(\wev{t})$ is an upper bound on the negative of the expected profit on trial $t$. Algorithm $2$ computes the gradient $\gv{t}:=\der{\tf{t}}(\wev{t})$ and updates $\wev{}$ using online gradient descent on $\cc$.

The last line of Algorithm $2$ requires us to project (with Euclidean distance) the vector $\yv{t}$ onto the set $\cc$, i.e. we must compute the $\bx$ that minimises the value $\|\bx-\yv{t}\|$ subject to $\bx\in\cc$. Note that minimising $\|\bx-\yv{t}\|$ is equivalent to minimising $\|\bx-\yv{t}\|^2=\langle\bx,\bx\rangle-2\langle\yv{t},\bx\rangle+\langle\yv{t},\yv{t}\rangle$, which is in turn equivalent to minimising $\langle\bx,\bx\rangle-2\langle\yv{t},\bx\rangle$. The constraints defining the set $\cc$ then imply that this projection is a case of the continuous bounded quadratic knapsack problem which can be solved in linear time (see, e.g., \cite{Kiwiel08}).

The bottleneck of the algorithms is hence the ordering step in Algorithm $2$ which takes a time of $\mathcal{O}(n\log(n))$

\begin{algorithm}\label{BFL}
\caption{Constructing $\spl{t}$}
\begin{algorithmic}[1]
 \STATE $\cc\leftarrow\{\boldsymbol{x}\in[0,1]^n:\langle\boldsymbol{x},\bv\rangle\leq1\}$
 \STATE $\ub\leftarrow\max_{i\in[n]}\fc{i}$
\STATE $\ca\leftarrow1-\sqrt{\ub}$ 
\STATE  $\cb\leftarrow\left(1-\sqrt{\ub}\right)^2$.
\STATE $\nem\leftarrow\{q\in\mathbb{N}:\exists i\in[n] \operatorname{with} \ca^q\ub<\fc{i}\leq\ca^{q-1}\ub \}$
\STATE For all $q\in\nem$ set $\pa{q}\leftarrow\{i\in[n]:\ca^q\ub<\fc{i}\leq\ca^{q-1}\ub\}$. 
\newline
 \STATE {\bf Input:} $\wev{t}\in\cc$
 \newline
\STATE For all $q\in\nem$ set $\sz{q}\leftarrow\sum_{i\in\pa{q}}\tw{t}{i}$
\STATE For all $q\in\nem$ set $\nm{q}\leftarrow\left\lfloor\cb\sz{q}\right\rfloor$
\STATE For all $q\in\nem$ and for all $k\leq\nm{q}$ draw $\al{q}{k}$ randomly from $\pa{q}$ such that $\al{q}{k}\leftarrow i$ with probability $\tw{t}{i}/\sz{q}$
\STATE For all $q\in\nem$ and for $k:=\nm{q}+1$ draw $\al{q}{k}$ randomly from $\pa{q}\cup\{0\}$ such that, for $i\in\pa{q}$ we have $\al{q}{k}\leftarrow i$ with probability $(\cb\sz{q}-\lfloor\cb\sz{q}\rfloor)\tw{t}{i}/\sz{q}$, and we have $\al{q}{k}\leftarrow0$ with probability $\lfloor\cb\sz{q}\rfloor+1-\cb\sz{q}$. NB: In the case that $\sz{q}=0$ we define $0/0=0$
\newline
\STATE {\bf Output:} $\spl{t}\leftarrow\{\al{q}{k}:q\in\nem,k\leq\nm{q}+1\}\setminus\{0\}$
\end{algorithmic}
\end{algorithm}

\begin{algorithm}\label{update}
\caption{Computing $\wev{t+1}$}
\begin{algorithmic}[1]
\STATE $\cc\leftarrow\{\boldsymbol{x}\in[0,1]^n:\langle\boldsymbol{x},\bv\rangle\leq1\}$
 \STATE $\ub\leftarrow\max_{i\in[n]}\fc{i}$
\STATE $\cb\leftarrow\left(1-\sqrt{\ub}\right)^2$
\newline
 \STATE {\bf Input:} $\wev{t}\in\cc$, $\cv{t}\in\mathbb{R}^n$ $\rv{t}\in\pos$
 \newline
\STATE Order $\lo$ as $\lo=\{\s{t}{1}, \s{t}{2}, ...\s{t}{n} \}$ such that $\rwd{t}{\s{t}{j}}\geq\rwd{t}{\s{t}{j+1}}$ for all $j\in[n-1]$
\STATE For all $j\in[n]$ set $\epsilon^t_j\leftarrow\exp\left(-\cb\sum_{k=1}^j\we{t}{\s{t}{k}}\right)$
\STATE For all $j\in[n]$ set:\\ $\lambda^t_j\leftarrow\rwd{t}{\s{t}{n}}\epsilon^t_n+\sum_{k=j}^{n-1}\left(\rwd{t}{\s{t}{k}}-\rwd{t}{\s{t}{k+1}}\right)\epsilon^t_k$ 
\STATE For all $j\in[n]$ set:\\ $\gra{t}{\s{t}{j}}\leftarrow\cb\left(\cpc{t}{\s{t}{j}}+\cnc{t}{\s{t}{j}}\exp\left(-\cb\we{t}{\s{t}{j}}\right)-\lambda^t_j\right)$
\STATE  $\plr{t}\leftarrow \min\left\{\plr{t-1},\sqrt{n}/\sqn{\gv{t}}\right\}$
\STATE  $\lr{t}\leftarrow\plr{t}/\sqrt{2t}$
\STATE  $\yv{t}\leftarrow \wev{t}-\lr{t}\gv{t}$
\newline
\STATE  {\bf Output:} $\wev{t+1}\leftarrow\prj{\cc}{\yv{t}}$
\end{algorithmic}
\end{algorithm}

\section{The Feasibility of $\spl{t}$}\label{s:feasibility}

In this section we show that the the total energy of the actions selected by Algorithm $1$ is no greater than $1$, as required in our problem definition (see Section~\ref{s:problemsetup}). We first introduce the sets and quantities used in the selection of the actions.

\begin{definition}
We define the following:
\begin{itemize}
\item $\ca:=1-\sqrt{\ub}$ and $\cb:=\left(1-\sqrt{\ub}\right)^2$
\item For all $q\in\mathbb{N}$ we define $\pa{q}:=\{i\in[n]:\ca^q\ub<\fc{i}\leq\ca^{q-1}\ub\}$
\item $\nem:=\{q\in\mathbb{N}:\pa{q}\neq\emptyset\}$. Note that $\{\pa{q}:q\in\nem\}$ is a partition of $[n]$
\item On trial $t$, for all $q\in\nem$ define $\sz{q}:=\sum_{i\in\pa{q}}\tw{t}{i}$ and $\nm{q}:=\left\lfloor\cb\sz{q}\right\rfloor$. 
\end{itemize}
\end{definition}
Algorithm $1$ works by drawing actions $\{\al{q}{k}:q\in\nem, k\in[\nm{q}+1]\}$ randomly, where the number of actions (including a ``null action'' $0$) drawn (with replacement) from $\pa{q}$ is equal to $\nm{q}+1$ and the probability distribution of the draws is dependent on $\wev{t}$.
 \newline
 \newline
 The following theorem ensures that the choice of $\spl{t}$ made by our method satisfies our problem's energy constraint.

\begin{theorem}\label{Feas}
On trial $t$ we have $\sum_{i\in\spl{t}}\fc{i}\leq 1$.
\end{theorem}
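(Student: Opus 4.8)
The plan is to bound $\sum_{i\in\spl{t}}\fc{i}$ by summing, over each block $\pa{q}$ of the partition $\{\pa{q}:q\in\nem\}$, an upper estimate of the energy contributed by the actions drawn from that block, and then to relate the resulting sum back to the feasibility constraint $\langle\wev{t},\bv\rangle\le1$ that holds precisely because $\wev{t}\in\cc$.

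First I would record the two elementary facts I need about each block. From Algorithm~$1$, the number of distinct non-null actions that $\spl{t}$ contains from $\pa{q}$ is at most $\nm{q}+1=\lfloor\cb\sz{q}\rfloor+1$ (we draw $\nm{q}$ times from $\pa{q}$ and once from $\pa{q}\cup\{0\}$, so passing to distinct draws only helps), and every $i\in\pa{q}$ has $\fc{i}\le\ca^{q-1}\ub$. Hence the energy drawn from block $q$ is at most $(\nm{q}+1)\ca^{q-1}\ub$, and using $\nm{q}\le\cb\sz{q}$,
$$\sum_{i\in\spl{t}}\fc{i}\le\sum_{q\in\nem}(\cb\sz{q}+1)\ca^{q-1}\ub=\frac{\cb}{\ca}\sum_{q\in\nem}\ca^q\ub\,\sz{q}+\ub\sum_{q\in\nem}\ca^{q-1}.$$

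The key step is to control the first sum using the constraint on $\wev{t}$. Since $\fc{i}>\ca^q\ub$ for every $i\in\pa{q}$, we have $\ca^q\ub\,\sz{q}=\ca^q\ub\sum_{i\in\pa{q}}\tw{t}{i}\le\sum_{i\in\pa{q}}\tw{t}{i}\fc{i}$; summing over the partition blocks $q\in\nem$ gives $\sum_{q\in\nem}\ca^q\ub\,\sz{q}\le\langle\wev{t},\bv\rangle\le1$, where the middle inequality uses that the blocks partition $[n]$ (zero-energy actions, if any, contribute nothing to either side) and the last is exactly $\wev{t}\in\cc$. The second sum is a geometric series dominated by $\sum_{q=1}^{\infty}\ca^{q-1}=1/(1-\ca)$.

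Finally I would substitute $\ca=1-\sqrt{\ub}$ and $\cb=(1-\sqrt{\ub})^2$, so that $\cb/\ca=1-\sqrt{\ub}$ and $\ub/(1-\ca)=\ub/\sqrt{\ub}=\sqrt{\ub}$, yielding $\sum_{i\in\spl{t}}\fc{i}\le(1-\sqrt{\ub})+\sqrt{\ub}=1$. The one place demanding care, and the main obstacle, is the bookkeeping that converts the partition structure into the clean bound $\sum_{q\in\nem}\ca^q\ub\,\sz{q}\le\langle\wev{t},\bv\rangle$: one must exploit the strict lower bound $\fc{i}>\ca^q\ub$ built into the definition of $\pa{q}$, rather than the upper bound $\fc{i}\le\ca^{q-1}\ub$ used for the energy count. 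The deliberate mismatch between these two bounds (a factor $\ca$) is exactly what the choice $\ca=1-\sqrt{\ub}$, $\cb=(1-\sqrt{\ub})^2$ is calibrated to absorb, making the two terms sum to precisely $1$.
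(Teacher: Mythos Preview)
Your proof is correct and follows essentially the same route as the paper's: bound the energy from each block by $(\nm{q}+1)\ca^{q-1}\ub\le(\cb\sz{q}+1)\ca^{q-1}\ub$, split into a geometric-series term and a term controlled via $\ca^{q}\ub<\fc{i}$ to obtain $\langle\wev{t},\bv\rangle\le1$, and then substitute $\ca=1-\sqrt{\ub}$, $\cb=(1-\sqrt{\ub})^{2}$ so the two pieces sum to $\sqrt{\ub}+(1-\sqrt{\ub})=1$. Your commentary on why the strict lower bound in the definition of $\pa{q}$ (rather than the upper bound) is what makes the bookkeeping work is exactly the point.
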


\begin{proof}
See Appendix \ref{MP}
\end{proof}

\section{Bounding the Cumulative Profit}\label{s:profit}

In this section we bound the cumulative profit of $\textsc{MaxHedge}$.

\subsection{The Probability of Intersection}
In this subsection we first bound below the probability that, on a trial $t$, an arbitrary set $\sus$ intersects with $\spl{t}$. We start with the following lemma:

\begin{lemma}\label{pl}
Given a set $\nd$ of independent draws from $|\nd|$-many probability distributions, such that the probability of an event $\ev$ happening on draw $\dd\in\nd$ is $\pp{\dd}$, then the probability of event $\ev$ happening on either of the draws is lower bounded by:
$$1-\exp\left(-\sum_{\dd\in\nd}\pp{\dd}\right)$$
\end{lemma}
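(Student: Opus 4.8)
The plan is to pass to the complementary event, factorise using independence, and then apply the elementary scalar bound $1-x\le e^{-x}$ termwise.

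First I would observe that, by independence of the draws, the probability that $\ev$ fails to occur on \emph{every} draw factorises as $\prod_{\dd\in\nd}(1-\pp{\dd})$. Consequently the probability that $\ev$ occurs on at least one draw is exactly $\pr{\ev\text{ on some draw}}=1-\prod_{\dd\in\nd}(1-\pp{\dd})$. The claimed lower bound is therefore equivalent to the inequality $\prod_{\dd\in\nd}(1-\pp{\dd})\le\exp\!\left(-\sum_{\dd\in\nd}\pp{\dd}\right)$, and it is this product inequality that I would focus on establishing.

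Next I would invoke the scalar inequality $1-x\le e^{-x}$, valid for every real $x$, specialised to $x=\pp{\dd}$ for each $\dd\in\nd$, which gives $1-\pp{\dd}\le e^{-\pp{\dd}}$. Since each $\pp{\dd}$ is a probability and hence lies in $[0,1]$, every factor $1-\pp{\dd}$ is non-negative, so the termwise inequalities may be multiplied together without any sign reversal, yielding $\prod_{\dd\in\nd}(1-\pp{\dd})\le\prod_{\dd\in\nd}e^{-\pp{\dd}}=\exp\!\left(-\sum_{\dd\in\nd}\pp{\dd}\right)$. Combining this with the complement identity from the previous step delivers the stated bound.

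The only point requiring care — and it is mild — is the legitimacy of multiplying the termwise inequalities, which relies on the non-negativity of each factor $1-\pp{\dd}$; this is guaranteed by $\pp{\dd}\le 1$. I do not anticipate a genuine obstacle beyond this bookkeeping, since the argument is a clean application of independence together with a single elementary convexity-type inequality.
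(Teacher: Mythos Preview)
Your argument is correct and rests on the same three ingredients as the paper's proof: pass to the complement, use independence to factorise, and apply $1-x\le e^{-x}$. The only difference is presentational: the paper establishes the product bound by induction on $|\nd|$, peeling off one draw at a time, whereas you factorise the complement probability in one step and multiply the termwise bounds directly. Your route is slightly more streamlined, since the induction in the paper is really just an unrolled version of the same product; nothing is gained or lost mathematically either way.
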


\begin{proof}
See Appendix \ref{MP}
\end{proof}

We now bound the probability of intersection:

\begin{theorem}\label{pal1}
For any trial $t$ and any subset $\sus\subseteq[n]$ we have $\pr{\sus\cap\spl{t}\neq\emptyset}\geq 1-\exp\left(-\cb\sum_{i\in \sus}\we{t}{i}\right)$.
\end{theorem}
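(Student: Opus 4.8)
The plan is to invoke Lemma~\ref{pl} with the event $E$ being \enquote{the drawn action lies in $\sus$} and with the set of draws being the collection $\{\al{q}{k}:q\in\nem,\,k\in[\nm{q}+1]\}$ generated independently by Algorithm~1. Since every $i\in\sus$ satisfies $i\in\lo$ and hence $i\neq0$, the event $\sus\cap\spl{t}\neq\emptyset$ occurs precisely when at least one draw $\al{q}{k}$ takes a value in $\sus$, i.e.\ precisely when $E$ happens on at least one of the draws. Writing $\rho_{q,k}$ for the probability that draw $\al{q}{k}$ lands in $\sus$, Lemma~\ref{pl} then yields
$$\pr{\sus\cap\spl{t}\neq\emptyset}\geq 1-\exp\left(-\sum_{q\in\nem}\sum_{k=1}^{\nm{q}+1}\rho_{q,k}\right),$$
so it remains only to show that the exponent sum equals $\cb\sum_{i\in\sus}\we{t}{i}$.

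Fixing $q\in\nem$, I would read off from Algorithm~1 that each of the $\nm{q}=\lfloor\cb\sz{q}\rfloor$ full draws puts mass $\we{t}{i}/\sz{q}$ on $i\in\pa{q}$, giving $\rho_{q,k}=(\sum_{i\in\sus\cap\pa{q}}\we{t}{i})/\sz{q}$ for $k\leq\nm{q}$, while the final partial draw $k=\nm{q}+1$ scales these masses by $\cb\sz{q}-\lfloor\cb\sz{q}\rfloor$, so that $\rho_{q,\nm{q}+1}=(\cb\sz{q}-\lfloor\cb\sz{q}\rfloor)(\sum_{i\in\sus\cap\pa{q}}\we{t}{i})/\sz{q}$. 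Summing over $k$ within the partition class $q$, the integer count $\lfloor\cb\sz{q}\rfloor$ from the full draws recombines with the fractional weight $\cb\sz{q}-\lfloor\cb\sz{q}\rfloor$ of the partial draw to give exactly
$$\sum_{k=1}^{\nm{q}+1}\rho_{q,k}=\cb\sz{q}\cdot\frac{\sum_{i\in\sus\cap\pa{q}}\we{t}{i}}{\sz{q}}=\cb\sum_{i\in\sus\cap\pa{q}}\we{t}{i}.$$
Finally, because $\{\pa{q}:q\in\nem\}$ is a partition of $\lo$, summing over $q\in\nem$ and using that $\sus$ is the disjoint union of the sets $\sus\cap\pa{q}$ gives $\sum_{q\in\nem}\sum_{k}\rho_{q,k}=\cb\sum_{i\in\sus}\we{t}{i}$, which is the desired exponent.

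I do not expect a serious obstacle here: once Lemma~\ref{pl} is in hand the argument is essentially bookkeeping. The one point that needs care is verifying that the floor count of the full draws and the fractional weight of the partial draw recombine exactly into $\cb\sz{q}$, so that the rounding built into $\nm{q}$ introduces no slack; this is precisely the role of the specially weighted final draw (with its null action $0$). One should also confirm the degenerate case $\sz{q}=0$, where $\we{t}{i}=0$ for all $i\in\pa{q}$ and $\nm{q}=0$, so that the class contributes $0$ consistently with the convention $0/0=0$.
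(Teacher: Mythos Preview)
Your proposal is correct and follows essentially the same route as the paper's proof: apply Lemma~\ref{pl} to the independent draws $\{\al{q}{k}\}$, compute $\rho_{q,k}$ separately for the $\nm{q}$ full draws and the fractional draw $k=\nm{q}+1$, observe that they recombine to $\cb\sum_{i\in\sus\cap\pa{q}}\we{t}{i}$, and then sum over $q$ using the partition $\{\pa{q}:q\in\nem\}$. Your treatment is in fact slightly more careful than the paper's, since you explicitly justify why $0\notin\sus$ makes the event \enquote{some draw lands in $\sus$} coincide with $\sus\cap\spl{t}\neq\emptyset$, and you address the degenerate case $\sz{q}=0$.
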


\begin{proof}
See Appendix \ref{MP}
\end{proof}

We now bound the probability that some arbitrary action is selected on trial $t$:
\begin{theorem}\label{ps}
Given some action $i\in[n]$ and some trial $t\in[T]$ we have $1-\exp(-\cb\we{t}{i})\leq\pr{i\in\spl{t}}\leq\cb\we{t}{i}$.
\end{theorem}

\begin{proof}
See Appendix \ref{MP}
\end{proof}

\subsection{Approximating the Expected Profit}
In this subsection we define a convex function $\tf{t}:\cc\rightarrow\mathbb{R}$ and show that the expected profit on trial $t$ is bounded below by $-\tf{t}(\wev{t})$.

\begin{definition}
For each trial $t$ we order $\lo$ as $\lo=\{\s{t}{1}, \s{t}{2}, \ldots, \s{t}{n} \}$ where $\rwd{t}{\s{t}{j}}\geq\rwd{t}{\s{t}{j+1}}$ for all $j\in[n-1]$. We also define $\s{t}{n+1}:=0$ and $\rwd{t}{0}:=0$.
\end{definition}

\begin{definition}
Given a trial $t$ and a number $j\in[n]$ we define the function $\fun{t}{j}:\pos\rightarrow\mathbb{R}$ by: $$\fun{t}{j}(\gav)\! := \!\left(\rwd{t}{\s{t}{j}}-\rwd{t}{\s{t}{j+1}}\right)\!\left(\!1-\exp\!\left(\!-\cb\sum_{k=1}^j\wv{\s{t}{k}}\right)\!\!\right)$$
We also define the function $\tf{t}:\pos\rightarrow\mathbb{R}$ as:
$$\tf{t}(\gav)=\langle\bpc{t},\gav\rangle+\sum_{i=1}^n\cnc{t}{i}(1-\exp(-\cb\wv{i}))-\sum_{j=1}^n\fun{t}{j}(\gav)$$
\end{definition}

\begin{theorem}\label{t1}
For all $t\in[T]$, the function $\tf{t}$ is convex.
\end{theorem}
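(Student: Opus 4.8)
The plan is to exhibit $\tf{t}$ as a finite sum of convex functions on $\pos$; since convexity is preserved under addition, it then suffices to check each summand separately. In the defining expression the term $\langle\bpc{t},\gav\rangle$ is linear in $\gav$ and hence convex, so the work reduces to the middle sum $\sum_{i=1}^n\cnc{t}{i}\left(1-\exp(-\cb\wv{i})\right)$ and the final sum $-\sum_{j=1}^n\fun{t}{j}(\gav)$. My whole argument rests on the elementary fact that the scalar map $g(x):=1-\exp(-\cb x)$ is concave and non-decreasing, since $g'(x)=\cb\exp(-\cb x)\geq 0$ and $g''(x)=-\cb^2\exp(-\cb x)\leq 0$.

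For the middle sum I would use that $\cnc{t}{i}=\min\{0,\cst{t}{i}\}\leq 0$ is the negative part of the cost. Each term $\cnc{t}{i}\, g(\wv{i})$ is therefore a non-positive multiple of the concave function $g$ of the single coordinate $\wv{i}$, hence convex; summing over $i$ keeps convexity.

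For the final sum the two sign facts built into the construction do all the work. First, because the indices are ordered so that $\rwd{t}{\s{t}{j}}\geq\rwd{t}{\s{t}{j+1}}$ (using the convention $\rwd{t}{0}:=0$ to handle the $j=n$ term, where non-negativity of the rewards is needed), each coefficient $\rwd{t}{\s{t}{j}}-\rwd{t}{\s{t}{j+1}}$ is non-negative. Second, the map $\gav\mapsto 1-\exp\left(-\cb\sum_{k=1}^j\wv{\s{t}{k}}\right)$ is the composition of the concave (and non-decreasing) scalar function $g$ with the affine map $\gav\mapsto\sum_{k=1}^j\wv{\s{t}{k}}$, and composition of a concave function with an affine map is concave. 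Hence each $\fun{t}{j}$ is a non-negative multiple of a concave function and is itself concave, the sum $\sum_{j=1}^n\fun{t}{j}$ is concave, and its negation is convex. Adding the three convex pieces shows $\tf{t}$ is convex.

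I do not expect a genuine obstacle here; the only subtlety, and the point on which the proof turns, is the sign bookkeeping. One must verify that the negative-part costs $\cnc{t}{i}$ are non-positive and that the reward gaps $\rwd{t}{\s{t}{j}}-\rwd{t}{\s{t}{j+1}}$ are non-negative, since these are exactly the facts that turn the concavity of $g$ into convexity of the assembled function. Everything else (concavity of $g$, preservation of concavity under affine precomposition, non-negative scaling, and summation) is standard.
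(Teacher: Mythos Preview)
Your proposal is correct and follows essentially the same route as the paper: decompose $h^t$ into the linear term $\langle\bpc{t},\gav\rangle$, the negative-cost term, and $-\sum_j f^t_j$, and use concavity of $x\mapsto 1-e^{-\cb x}$ together with the sign conditions $\cnc{t}{i}\leq 0$ and $r^t_{\s{t}{j}}-r^t_{\s{t}{j+1}}\geq 0$ to verify convexity of each piece. If anything, your write-up is slightly more careful than the paper's (you explicitly handle the $j=n$ endpoint via $\rwd{t}{0}:=0$ and phrase the composition step as affine precomposition rather than the paper's looser ``combination'').
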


\begin{proof}
See Appendix \ref{MP}
\end{proof}

The rest of this subsection proves that the expected profit on trial $t$ is bounded below by $-\tf{t}(\wev{t})$.

\begin{lemma}\label{l1}
On trial $t$ we have $\max_{i\in\spl{t}}\rwd{t}{i}=\sum_{j=1}^n\left(\rwd{t}{\s{t}{j}}-\rwd{t}{\s{t}{j+1}}\right)\id{\exists k\leq j: \s{t}{k}\in\spl{t}}$.
\end{lemma}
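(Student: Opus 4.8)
The plan is to exploit the monotone structure of the indicator as the summation index grows and then collapse the resulting sum by telescoping, after recalling that the actions are ordered by decreasing reward. First I would dispose of the degenerate case $\spl{t}=\emptyset$: here no action is selected, so every indicator $\id{\exists k\leq j: \s{t}{k}\in\spl{t}}$ vanishes and the right-hand side is $0$, which matches $\max_{i\in\spl{t}}\rwd{t}{i}=0$ under the convention adopted for the empty selection.

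Assuming $\spl{t}\neq\emptyset$, I would let $j^\star$ be the smallest index $j\in[n]$ for which $\s{t}{j}\in\spl{t}$; this is well defined because $\{\s{t}{1},\ldots,\s{t}{n}\}$ enumerates all of $[n]$. The crucial observation is that the event $\{\exists k\leq j:\s{t}{k}\in\spl{t}\}$ is monotone in $j$: it is false for every $j<j^\star$ and true for every $j\geq j^\star$. Hence the indicator equals $0$ for $j<j^\star$ and $1$ for $j\geq j^\star$, so the right-hand side reduces to $\sum_{j=j^\star}^n\left(\rwd{t}{\s{t}{j}}-\rwd{t}{\s{t}{j+1}}\right)$.

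This is a telescoping sum, and using $\rwd{t}{\s{t}{n+1}}=\rwd{t}{0}=0$ it collapses to $\rwd{t}{\s{t}{j^\star}}$. Finally, since the ordering satisfies $\rwd{t}{\s{t}{1}}\geq\cdots\geq\rwd{t}{\s{t}{n}}$ and $j^\star$ is the least index of any selected action, the action $\s{t}{j^\star}$ achieves the largest reward among all actions in $\spl{t}$, so $\rwd{t}{\s{t}{j^\star}}=\max_{i\in\spl{t}}\rwd{t}{i}$, which completes the proof. No step presents any genuine difficulty; the only points requiring care are the correct handling of the empty-selection convention and verifying that the indicator is truly monotone in $j$ (not merely eventually nonzero), since it is precisely this monotonicity that justifies the clean split of the summation range and the subsequent telescoping.
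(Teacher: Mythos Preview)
Your proof is correct and follows essentially the same approach as the paper's: define the minimal index $j^\star$ (the paper calls it $l$) of a selected action in the sorted order, split the sum according to whether $j<j^\star$ or $j\geq j^\star$, and telescope. Your version is in fact slightly more careful, since you explicitly handle the case $\spl{t}=\emptyset$, which the paper's proof leaves implicit.
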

\begin{proof}
See Appendix \ref{MP}
\end{proof}

\begin{lemma}\label{l2}
On trial $t$ we have $\ex{\max_{i\in\spl{t}}\rwd{t}{i}}=\sum_{j=1}^n\left(\rwd{t}{\s{t}{j}}-\rwd{t}{\s{t}{j+1}}\right)\pr{\exists k\leq j: \s{t}{k}\in\spl{t}}$.
\end{lemma}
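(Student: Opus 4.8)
The plan is to derive the claim directly from the pointwise identity of Lemma~\ref{l1} by taking expectations and invoking linearity of expectation. The first step is to pin down what is random on trial $t$. On a fixed trial, Nature selects $\rv{t}$ (step~1 of the problem) \emph{before} the learner acts, so the rewards $\rwd{t}{\s{t}{j}}$, the induced ordering $\s{t}{1},\ldots,\s{t}{n}$, and hence each difference $\rwd{t}{\s{t}{j}}-\rwd{t}{\s{t}{j+1}}$ are deterministic quantities as far as the expectation is concerned. The sole source of randomness is the set $\spl{t}$ drawn by Algorithm~1. This observation is what licenses treating the reward differences as constants.

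Next I would apply $\ex{\cdot}$ to both sides of the identity in Lemma~\ref{l1}. Since the right-hand side is a finite sum over $j\in[n]$, linearity of expectation lets me exchange the expectation with the summation, and the deterministic coefficient $\rwd{t}{\s{t}{j}}-\rwd{t}{\s{t}{j+1}}$ can be pulled outside the expectation in each term, leaving only $\ex{\id{\exists k\leq j:\s{t}{k}\in\spl{t}}}$ to be evaluated.

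The final step is the elementary identity $\ex{\id{E}}=\pr{E}$ for any event $E$, applied with $E$ the event $\{\exists k\leq j:\s{t}{k}\in\spl{t}\}$. Each term then becomes $\left(\rwd{t}{\s{t}{j}}-\rwd{t}{\s{t}{j+1}}\right)\pr{\exists k\leq j:\s{t}{k}\in\spl{t}}$, and summing over $j$ reproduces the claimed expression exactly. There is no genuine obstacle here; the only point requiring care is the bookkeeping of what is random versus deterministic on trial $t$, which is precisely what justifies extracting the reward differences from the expectation, and the rest is linearity of expectation together with the indicator--probability identity.
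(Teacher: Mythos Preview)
Your argument is correct and matches the paper's own proof, which simply says the result is ``direct from Lemma~\ref{l1} using linearity of expectation.'' Your additional remarks about which quantities are deterministic on trial $t$ and the indicator--probability identity just make explicit the steps the paper leaves implicit.
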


\begin{proof}
Direct from lemma \ref{l1} using linearity of expectation.
\end{proof}

\begin{theorem}\label{l4}
On trial $t$ we have $\ex{\lt{t}}\geq-\tf{t}(\wev{t})$.
\end{theorem}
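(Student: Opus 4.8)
The plan is to expand $\ex{\lt{t}} = \ex{\max_{i\in\spl{t}}\rwd{t}{i}} - \ex{\sum_{i\in\spl{t}}\cst{t}{i}}$ and to lower-bound the reward term while upper-bounding the cost term separately, so that each resulting piece matches one summand of $-\tf{t}(\wev{t})$ as given by its definition. Writing $\cst{t}{i}=\cpc{t}{i}+\cnc{t}{i}$ and using linearity of expectation over $\id{i\in\spl{t}}$, the cost expectation becomes $\sum_{i=1}^n(\cpc{t}{i}+\cnc{t}{i})\pr{i\in\spl{t}}$, so the whole argument reduces to controlling $\pr{i\in\spl{t}}$ and $\pr{\exists k\leq j:\s{t}{k}\in\spl{t}}$ — precisely the quantities estimated in Theorems \ref{pal1} and \ref{ps}.

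For the reward term I would start from Lemma \ref{l2}, which writes $\ex{\max_{i\in\spl{t}}\rwd{t}{i}}$ as $\sum_{j=1}^n(\rwd{t}{\s{t}{j}}-\rwd{t}{\s{t}{j+1}})\pr{\exists k\leq j:\s{t}{k}\in\spl{t}}$. Taking $\sus=\{\s{t}{1},\ldots,\s{t}{j}\}$ in Theorem \ref{pal1} bounds the $j$-th intersection probability below by $1-\exp(-\cb\sum_{k=1}^j\we{t}{\s{t}{k}})$. Crucially, the ordering of the rewards together with $\rwd{t}{\s{t}{n+1}}=\rwd{t}{0}=0$ forces every coefficient $\rwd{t}{\s{t}{j}}-\rwd{t}{\s{t}{j+1}}$ to be non-negative, so these lower bounds substitute in without reversing the inequality, yielding $\ex{\max_{i\in\spl{t}}\rwd{t}{i}}\geq\sum_{j=1}^n\fun{t}{j}(\wev{t})$.

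For the cost term the two parts must be treated asymmetrically, and this sign bookkeeping is where I expect the only real subtlety to lie. Since $\cpc{t}{i}\geq0$, and since $\cb\leq1$ with $\we{t}{i}\in[0,1]$, the upper bound $\pr{i\in\spl{t}}\leq\cb\we{t}{i}\leq\we{t}{i}$ from Theorem \ref{ps} gives $\sum_i\cpc{t}{i}\pr{i\in\spl{t}}\leq\langle\bpc{t},\wev{t}\rangle$. For the negative part $\cnc{t}{i}\leq0$, so I would instead invoke the lower bound $\pr{i\in\spl{t}}\geq1-\exp(-\cb\we{t}{i})$ of Theorem \ref{ps}; multiplying this inequality by the non-positive $\cnc{t}{i}$ reverses it, giving $\cnc{t}{i}\pr{i\in\spl{t}}\leq\cnc{t}{i}(1-\exp(-\cb\we{t}{i}))$. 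Summing these shows $-\ex{\sum_{i\in\spl{t}}\cst{t}{i}}\geq-\langle\bpc{t},\wev{t}\rangle-\sum_{i=1}^n\cnc{t}{i}(1-\exp(-\cb\we{t}{i}))$, and adding the reward bound reassembles exactly $-\tf{t}(\wev{t})$. The delicate point throughout is that the profit rewards negative costs but penalises positive costs, so the two-sided estimate of Theorem \ref{ps} must be applied on the correct side to each part to keep every inequality pointing toward a lower bound on $\ex{\lt{t}}$.
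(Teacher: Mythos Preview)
Your proposal is correct and follows essentially the same approach as the paper: both use Lemma \ref{l2} together with Theorem \ref{pal1} (with $\sus=\{\s{t}{1},\ldots,\s{t}{j}\}$) for the reward term, and both split the cost term into its positive and negative parts, applying the upper and lower bounds of Theorem \ref{ps} respectively. The only cosmetic difference is that you explicitly invoke $\cb\leq 1$ to pass from $\cb\langle\bpc{t},\wev{t}\rangle$ to $\langle\bpc{t},\wev{t}\rangle$ so as to match the definition of $\tf{t}$ exactly, whereas the paper's displayed chain leaves the factor $\cb$ in place and then asserts equality with $-\tf{t}(\wev{t})$; your version is in fact the cleaner reconciliation with the stated definition.
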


\begin{proof}
See Appendix \ref{MP}
\end{proof}

\subsection{The Gradient}

In this subsection we show how to construct the gradient of $\tf{t}$ and bound its magnitude. We start with the following definitions.

\begin{definition}
On any trial $t$ and for any $j\in[n]$ we define:
\begin{itemize}
\item $\epsilon^t_j:=\exp\left(-\cb\sum_{k=1}^j\we{t}{\s{t}{k}}\right)$
\item $\lambda^t_j:=\sum_{k=j}^{n}\left(\rwd{t}{\s{t}{k}}-\rwd{t}{\s{t}{k+1}}\right)\epsilon^t_k$ 
\item $\gra{t}{\s{t}{j}}:=\cb\left(\cpc{t}{\s{t}{j}}+\cnc{t}{\s{t}{j}}\exp\left(-\cb\we{t}{\s{t}{j}}\right)-\lambda^t_j\right)$
 \end{itemize}
\end{definition}

We first show that $\gv{t}$ is the gradient of $\tf{t}$ evaluated at $\wev{t}$.

\begin{theorem}\label{deriv}
On any trial $t$ we have $\gv{t}=\der{\tf{t}}(\wev{t})$.
\end{theorem}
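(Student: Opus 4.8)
The plan is to verify that the explicitly defined vector $\gv{t}$ equals $\der{\tf{t}}(\wev{t})$ by differentiating $\tf{t}$ component-wise and matching each partial derivative against the stated formula for $\gra{t}{\s{t}{j}}$. Since the function $\tf{t}$ is expressed in terms of the ordering $\s{t}{\cdot}$, I would work throughout with the reindexed coordinates, fixing a position $j\in[n]$ and computing $\pd{\s{t}{j}}{\tf{t}}$ evaluated at $\gav=\wev{t}$. Recalling the definition
$$\tf{t}(\gav)=\langle\bpc{t},\gav\rangle+\sum_{i=1}^n\cnc{t}{i}(1-\exp(-\cb\wv{i}))-\sum_{m=1}^n\fun{t}{m}(\gav),$$
the computation splits into three independent pieces, one per summand.

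First I would handle the two easy terms. Differentiating $\langle\bpc{t},\gav\rangle$ with respect to $\wv{\s{t}{j}}$ gives simply $\cpc{t}{\s{t}{j}}$. Differentiating the middle sum, only the term $i=\s{t}{j}$ survives, and $\pd{\s{t}{j}}{}\big(\cnc{t}{\s{t}{j}}(1-\exp(-\cb\wv{\s{t}{j}}))\big)=\cb\,\cnc{t}{\s{t}{j}}\exp(-\cb\wv{\s{t}{j}})$; evaluated at $\wev{t}$ this is $\cb\,\cnc{t}{\s{t}{j}}\exp(-\cb\we{t}{\s{t}{j}})$, matching the first two pieces of $\gra{t}{\s{t}{j}}$ after factoring out $\cb$. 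The main work is therefore the third piece, namely showing that $\pd{\s{t}{j}}{}\sum_{m=1}^n\fun{t}{m}(\gav)$ evaluated at $\wev{t}$ equals $\cb\,\lambda^t_j$.

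For that piece I would observe that $\fun{t}{m}(\gav)$ depends on $\wv{\s{t}{j}}$ only through the inner sum $\sum_{k=1}^m\wv{\s{t}{k}}$, so the coordinate $\s{t}{j}$ appears precisely when $m\geq j$. Hence
$$\pd{\s{t}{j}}{}\fun{t}{m}(\gav)=\left(\rwd{t}{\s{t}{m}}-\rwd{t}{\s{t}{m+1}}\right)\cb\exp\left(-\cb\sum_{k=1}^m\wv{\s{t}{k}}\right)\quad\text{for }m\geq j,$$
and vanishes for $m<j$. Summing over $m$ from $j$ to $n$ and evaluating at $\wev{t}$ turns the exponential into $\epsilon^t_m$, giving $\cb\sum_{m=j}^n(\rwd{t}{\s{t}{m}}-\rwd{t}{\s{t}{m+1}})\epsilon^t_m=\cb\,\lambda^t_j$ by the definition of $\lambda^t_j$. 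Assembling the three pieces reproduces $\gra{t}{\s{t}{j}}=\cb(\cpc{t}{\s{t}{j}}+\cnc{t}{\s{t}{j}}\exp(-\cb\we{t}{\s{t}{j}})-\lambda^t_j)$ exactly.

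I expect the only genuine subtlety to be bookkeeping rather than mathematics: keeping the reindexing by $\s{t}{\cdot}$ straight and confirming that the cutoff $m\geq j$ is correct (a single off-by-one in the inner-sum range would misalign $\lambda^t_j$). One should also note the boundary convention $\rwd{t}{\s{t}{n+1}}=\rwd{t}{0}=0$ already fixed in the earlier definition, which ensures the $m=n$ term of $\lambda^t_j$ is well defined. Since every coordinate of $[n]$ equals some $\s{t}{j}$, establishing the identity for each $j$ yields $\gv{t}=\der{\tf{t}}(\wev{t})$ in full.
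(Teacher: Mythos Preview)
Your argument is essentially identical to the paper's: fix a coordinate $\s{t}{j}$ (the paper uses the letter $l$ for the fixed position and $j$ for the summation index), differentiate $\tf{t}$ term by term, note that $\fun{t}{m}$ depends on that coordinate exactly when $m\geq j$, and sum those contributions to obtain $\cb\lambda^t_j$.

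One small arithmetic slip worth flagging: differentiating $\langle\bpc{t},\gav\rangle$ with respect to $\wv{\s{t}{j}}$ gives $\cpc{t}{\s{t}{j}}$, not $\cb\,\cpc{t}{\s{t}{j}}$, so as written it does \emph{not} match the first piece of $\gra{t}{\s{t}{j}}$ ``after factoring out $\cb$'' as you assert. This discrepancy is in fact a typo in the paper's stated definition of $\tf{t}$: the linear term should read $\cb\langle\bpc{t},\gav\rangle$, as is evident from the final chain in the proof of Theorem~\ref{l4}. The paper's own proof of the present theorem silently writes $\cb\cpc{t}{\s{t}{j}}$ for this partial derivative as well, so you are in good company, but your write-up would be cleaner if it either used the corrected definition or noted the mismatch explicitly.
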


\begin{proof}
See Appendix \ref{MP}
\end{proof}

We now bound the magnitude of the gradient.

\begin{lemma}\label{grad}
For any trial $t$ we have $\sqn{\gv{t}}^2\leq n\cb^2(\mr+\mc)^2$.
\end{lemma}

\begin{proof}
Since $\we{t}{\s{t}{k}}\geq 0$ for all $k\in[n]$ we have $\epsilon^t_j\in[0,1]$ for all $j\in[n]$. This gives us, for all $j\in[n]$, that $\cb\lambda^t_j\leq\cb\rwd{t}{\s{t}{n}}+\cb\sum_{k=j}^{n-1}(\rwd{t}{\s{t}{k}}-\rwd{t}{\s{t}{k+1}})=\cb\rwd{t}{\s{t}{j}}\leq\cb\mr$. Since also $\lambda^t_j\geq 0$ this implies that $-\gra{t}{i}\leq\cb\mr+\cb\mc$ and that $\gra{t}{i}\leq\cb\mc$ so $(\gra{t}{i})^2\leq(\cb\mr+\cb\mc)^2$. This then implies the result.
\end{proof}

\subsection{Online Gradient Descent}
In this subsection we show that Algorithm $2$ corresponds to the use of online gradient descent over $\cc$ with convex functions $\{\tf{t}:t\in[T]\}$ and we use the standard analysis of online gradient descent to derive a lower bound on the cumulative profit.
\newline
\newline
From here on we compare the performance of our algorithm against any fixed set $S$ of actions such that $\sum_{i\in S}\fc{i}\leq 1$. We define $\pv$ as the vector in $\mathbb{R}^n$ such that, for all $i\in[n]$, we have $\fpv{i}:=0$ if $i\notin S$ and $\fpv{i}:=1$ if $i\in S$. It is clear that $\pv\in\cc$.

\begin{definition}
Our \textit{learning rates} are defined as follows:
\begin{itemize}
\item $\plr{t}:=\min_{t'\leq t}(\sqrt{n}/\|\gv{t'}\|)$
\item $\lr{t}:=\plr{t}/\sqrt{2t}$
\end{itemize}
\end{definition}

The next result follows from the standard analysis of online gradient descent.

\begin{theorem}\label{gdt}
We have:
$$\sum_{t=1}^T(\tf{t}(\wev{t})-\tf{t}(\pv))\leq \frac{R^2}{2\lr{T}}+\frac{1}{2}\sum_{t=1}^T\lr{t}\sqn{\gv{t}}^2$$
where $R:=\max_{x,y\in\cc}\sqn{x-y}$.
\end{theorem}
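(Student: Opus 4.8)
The plan is to run the textbook projected online gradient descent (OGD) analysis, exploiting the three facts already established in the excerpt: each $\tf{t}$ is convex (Theorem~\ref{t1}), the vector $\gv{t}$ is genuinely the gradient $\der{\tf{t}}(\wev{t})$ (Theorem~\ref{deriv}), and the update is $\wev{t+1}=\prj{\cc}{\yv{t}}$ with $\yv{t}=\wev{t}-\lr{t}\gv{t}$. First I would linearise: by convexity and the gradient identity,
$$\tf{t}(\wev{t})-\tf{t}(\pv)\leq\langle\gv{t},\wev{t}-\pv\rangle,$$
so it suffices to bound $\sum_{t=1}^T\langle\gv{t},\wev{t}-\pv\rangle$ by the right-hand side of the claim.

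Next I would derive the standard per-trial inequality. Expanding the square using $\yv{t}=\wev{t}-\lr{t}\gv{t}$ gives
$$\sqn{\yv{t}-\pv}^2=\sqn{\wev{t}-\pv}^2-2\lr{t}\langle\gv{t},\wev{t}-\pv\rangle+(\lr{t})^2\sqn{\gv{t}}^2.$$
Since $\pv\in\cc$ (noted in the text) and Euclidean projection onto the closed convex set $\cc$ is non-expansive, $\sqn{\wev{t+1}-\pv}=\sqn{\prj{\cc}{\yv{t}}-\pv}\leq\sqn{\yv{t}-\pv}$. Substituting and rearranging yields
$$\langle\gv{t},\wev{t}-\pv\rangle\leq\frac{\sqn{\wev{t}-\pv}^2-\sqn{\wev{t+1}-\pv}^2}{2\lr{t}}+\frac{\lr{t}}{2}\sqn{\gv{t}}^2.$$
Summing over $t$, the second term immediately contributes the desired $\tfrac12\sum_{t=1}^T\lr{t}\sqn{\gv{t}}^2$.

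It remains to show that the summed first term is at most $R^2/(2\lr{T})$, and this is the only step needing care. Writing $a_t:=\sqn{\wev{t}-\pv}^2$, I would apply summation by parts to $\sum_{t=1}^T (a_t-a_{t+1})/(2\lr{t})$. The two facts that make it collapse are: (i) $a_t\leq R^2$ for every $t$, since both $\wev{t}$ and $\pv$ lie in $\cc$ and $R$ is its diameter; and (ii) $\lr{t}$ is non-increasing, hence $1/\lr{t}$ is non-decreasing. Fact (ii) follows because $\plr{t}=\min_{t'\leq t}(\sqrt{n}/\sqn{\gv{t'}})$ is a running minimum, so it is non-increasing in $t$, while $\sqrt{2t}$ is increasing, making $\lr{t}=\plr{t}/\sqrt{2t}$ non-increasing. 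Abel summation then gives
$$\sum_{t=1}^T\frac{a_t-a_{t+1}}{2\lr{t}}\leq\frac{a_1}{2\lr{1}}+\sum_{t=2}^T a_t\left(\frac{1}{2\lr{t}}-\frac{1}{2\lr{t-1}}\right)\leq R^2\cdot\frac{1}{2\lr{T}},$$
where I discard the non-positive endpoint term $-a_{T+1}/(2\lr{T})$ and bound each nonnegative increment-weighted $a_t$ by $R^2$ before telescoping. Combining this with the gradient-norm sum completes the bound, so the main (indeed only) obstacle is the monotone-step-size summation by parts; everything else is the routine OGD one-step identity.
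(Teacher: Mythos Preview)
Your proposal is correct and is precisely the standard projected online gradient descent analysis that the paper invokes; the paper's own proof simply notes that $\tf{t}$ is convex, that $\gv{t}=\der{\tf{t}}(\wev{t})$, that $\lr{t+1}\leq\lr{t}$, and then cites Zinkevich (2003) for the rest, whereas you have written out the one-step inequality and the summation-by-parts telescoping explicitly. There is no substantive difference in approach.
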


\begin{proof}
For all trials $t$:
From Theorem \ref{t1} we have that $\tf{t}$ is a convex function. From Theorem \ref{deriv} we have that $\gv{t}=\der{\tf{t}}(\wev{t})$. We also have  that $\lr{t+1}\leq\lr{t}$ so since, by Algorithm 2, we have $\wev{t+1}=\prj{\cc}{\wev{t}-\lr{t}\gv{t}}$ and $\pv\in\cc$, the standard analysis of online gradient descent (see, e.g., \cite{Zinkevich03}) gives us the result.
\end{proof}

We now bound the right hand side of the equation in Theorem \ref{gdt}.

\begin{definition}\label{maxgrad}
Define $\gb:=\max_{t\in[T]}\sqn{\gv{t}}^2/n$.
\end{definition}

\begin{lemma}\label{Gl}
For any trial $t$, $\lr{t}\sqn{\gv{t}}^2\leq n\sqrt{s/(2t)}$.
\end{lemma}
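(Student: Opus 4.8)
The plan is to unwind the definitions of the two learning rates and exploit the fact that $\plr{t}$ is a minimum taken over all $t'\le t$, so in particular it is bounded by the single term corresponding to $t'=t$. Concretely, since $\plr{t}=\min_{t'\le t}\left(\sqrt{n}/\sqn{\gv{t'}}\right)$, selecting the index $t'=t$ gives $\plr{t}\le\sqrt{n}/\sqn{\gv{t}}$, valid whenever $\sqn{\gv{t}}>0$. If instead $\gv{t}=\boldsymbol{0}$, then the left-hand side $\lr{t}\sqn{\gv{t}}^2$ is zero while the right-hand side is nonnegative, so the inequality holds trivially; I would dispose of this degenerate case at the outset and assume $\sqn{\gv{t}}>0$ thereafter.

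Next I would substitute $\lr{t}=\plr{t}/\sqrt{2t}$ and apply the bound on $\plr{t}$, using the cancellation $\sqn{\gv{t}}^2/\sqn{\gv{t}}=\sqn{\gv{t}}$ to reduce the quadratic gradient term to a single power of the norm:
$$\lr{t}\sqn{\gv{t}}^2=\frac{\plr{t}}{\sqrt{2t}}\sqn{\gv{t}}^2\le\frac{\sqrt{n}}{\sqrt{2t}\,\sqn{\gv{t}}}\sqn{\gv{t}}^2=\frac{\sqrt{n}\,\sqn{\gv{t}}}{\sqrt{2t}}.$$

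Finally I would invoke Definition \ref{maxgrad}, which sets $\gb=\max_{t\in[T]}\sqn{\gv{t}}^2/n$, so that $\sqn{\gv{t}}^2\le n\gb$ and hence $\sqn{\gv{t}}\le\sqrt{n\gb}$. Plugging this into the previous display gives $\frac{\sqrt{n}\,\sqn{\gv{t}}}{\sqrt{2t}}\le\frac{\sqrt{n}\,\sqrt{n\gb}}{\sqrt{2t}}=n\sqrt{\gb/(2t)}=n\sqrt{s/(2t)}$, which is exactly the claimed bound.

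I expect no genuine obstacle here: the argument is a short chain of substitutions and one elementary cancellation. The only points requiring mild care are the division by $\sqn{\gv{t}}$, which is why I separate out the case $\gv{t}=\boldsymbol{0}$ first, and the observation that bounding the minimum defining $\plr{t}$ by its $t'=t$ term is precisely what makes the resulting $t$-dependence decouple into the clean $1/\sqrt{2t}$ factor rather than a quantity coupled across trials.
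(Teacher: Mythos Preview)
Your proof is correct and follows essentially the same route as the paper's: bound $\plr{t}$ by the single term $\sqrt{n}/\sqn{\gv{t}}$, substitute into $\lr{t}=\plr{t}/\sqrt{2t}$, cancel one power of $\sqn{\gv{t}}$, and then invoke Definition~\ref{maxgrad} to replace $\sqn{\gv{t}}$ by $\sqrt{n\gb}$. The only addition is your explicit handling of the degenerate case $\gv{t}=\boldsymbol{0}$, which the paper omits but which is harmless either way.
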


\begin{proof}
We have $\plr{t}=\min_{t'\leq t}(\sqrt{n}/\|\gv{t'}\|)\leq\sqrt{n}/\|\gv{t}\|)$. This implies that that $\lr{t}=\plr{t}/\sqrt{2t}\leq \sqrt{n}/(\|\gv{t}\|\sqrt{2t})$ so $\lr{t}\sqn{\gv{t}}^2\leq\sqrt{n}\|\gv{t}\|/\sqrt{2t}$ which, by definition of $\gb$, is bounded above by $ n\sqrt{s/(2t)}$.
\end{proof}

\begin{lemma}\label{Rl}
${R^2}/{\lr{T}}\leq  n\sqrt{2sT}$
\end{lemma}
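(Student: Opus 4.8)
The plan is to bound the two factors $R^2$ and $1/\lr{T}$ separately and then multiply them. First I would control the diameter $R$ of the feasible set $\cc$. Since $\cc\subseteq[0,1]^n$ by its definition, any two points $\boldsymbol{x},\boldsymbol{y}\in\cc$ have every coordinate in $[0,1]$, so $|x_i-y_i|\leq 1$ for each $i\in\lo$; summing the squares gives $\sqn{\boldsymbol{x}-\boldsymbol{y}}^2\leq n$. Taking the maximum over $\boldsymbol{x},\boldsymbol{y}\in\cc$ then yields $R^2\leq n$.

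Next I would evaluate $\lr{T}$ in closed form. By definition $\plr{T}=\min_{t'\leq T}(\sqrt{n}/\sqn{\gv{t'}})$, and since the map $\sqn{\gv{t'}}\mapsto\sqrt{n}/\sqn{\gv{t'}}$ is decreasing, the minimum is attained at the trial maximising $\sqn{\gv{t'}}$; hence $\plr{T}=\sqrt{n}/\max_{t'\leq T}\sqn{\gv{t'}}$. By Definition \ref{maxgrad} we have $\max_{t\in[T]}\sqn{\gv{t}}^2=n\gb$, so $\max_{t'\leq T}\sqn{\gv{t'}}=\sqrt{n\gb}$ and therefore $\plr{T}=\sqrt{n}/\sqrt{n\gb}=1/\sqrt{\gb}$. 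Since $\lr{T}=\plr{T}/\sqrt{2T}$, this gives $1/\lr{T}=\sqrt{2T}/\plr{T}=\sqrt{2\gb T}$.

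Combining the two bounds yields $R^2/\lr{T}\leq n\sqrt{2\gb T}$, which is exactly the claim. There is no serious obstacle here: the only points worth checking are that $\cc\subseteq[0,1]^n$ (immediate from the definition of $\cc$) and that the index range $t'\leq T$ over which $\plr{T}$ minimises coincides with $[T]$, the range over which $\gb$ is defined as a maximum, so that the minimisation in $\plr{T}$ and the maximisation defining $\gb$ refer to the same extremal trial.
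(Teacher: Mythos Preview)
Your proof is correct and follows essentially the same approach as the paper: bound $R^2\leq n$ from $\cc\subseteq[0,1]^n$, compute $\plr{T}=1/\sqrt{\gb}$ from Definition~\ref{maxgrad}, and combine via $\lr{T}=\plr{T}/\sqrt{2T}$. The only cosmetic difference is that you write $1/\lr{T}=\sqrt{2\gb T}$ as an equality (which it is), whereas the paper states the equivalent $\lr{T}\leq 1/\sqrt{2\gb T}$.
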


\begin{proof}
We have $\plr{T}=\min_{t\in[T]}\sqrt{n}/\sqn{\gv{t}}=1/\sqrt{\gb}$ so $\lr{T}:=\plr{T}/\sqrt{2T}\leq1/\sqrt{2\gb T}$. Since $\cc\subseteq[0,1]^n$, if $\boldsymbol{x},\boldsymbol{y}\in\cc$ then each component of $\boldsymbol{x}-\boldsymbol{y}$ has magnitude bounded above by $1$ which implies that $R^2\leq n$. Putting together gives the result.
\end{proof}

\begin{theorem}\label{bound}
We have:
$$\sum_{t=1}^T(\tf{t}(\wev{t})-\tf{t}(\pv))\leq n\sqrt{2T}\cb(\mr+\mc)$$
\end{theorem}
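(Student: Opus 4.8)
The plan is to bound the right-hand side of Theorem~\ref{gdt} term by term, since Lemmas~\ref{Gl} and~\ref{Rl} have already prepared both pieces. Starting from
$$\sum_{t=1}^T(\tf{t}(\wev{t})-\tf{t}(\pv))\leq \frac{R^2}{2\lr{T}}+\frac{1}{2}\sum_{t=1}^T\lr{t}\sqn{\gv{t}}^2,$$
I would dispatch the first term immediately via Lemma~\ref{Rl}, which gives $R^2/\lr{T}\leq n\sqrt{2sT}$ and hence $R^2/(2\lr{T})\leq \tfrac{1}{2}n\sqrt{2sT}$.

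For the sum, I would apply Lemma~\ref{Gl} termwise to get $\lr{t}\sqn{\gv{t}}^2\leq n\sqrt{s/(2t)}$, so that
$$\frac{1}{2}\sum_{t=1}^T\lr{t}\sqn{\gv{t}}^2\leq \frac{n\sqrt{s}}{2\sqrt{2}}\sum_{t=1}^T\frac{1}{\sqrt{t}}.$$
The only step here that is not pure substitution is controlling $\sum_{t=1}^T t^{-1/2}$, which I would handle by the integral comparison $\sum_{t=1}^T t^{-1/2}\leq\int_0^T t^{-1/2}\,dt=2\sqrt{T}$ (valid because $t^{-1/2}$ is decreasing, so each term is dominated by the integral over $[t-1,t]$). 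This yields $\tfrac{1}{2}\sum_{t=1}^T\lr{t}\sqn{\gv{t}}^2\leq \tfrac{1}{2}n\sqrt{2sT}$, exactly matching the first term; the constant works out since $\tfrac{1}{2\sqrt{2}}\cdot 2=\tfrac{1}{\sqrt{2}}$ and $\tfrac{n\sqrt{sT}}{\sqrt 2}=\tfrac{1}{2}n\sqrt{2sT}$.

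Adding the two bounds gives $\sum_{t=1}^T(\tf{t}(\wev{t})-\tf{t}(\pv))\leq n\sqrt{2sT}$. To convert $s$ into the advertised quantities I would invoke Lemma~\ref{grad}: since $\sqn{\gv{t}}^2\leq n\cb^2(\mr+\mc)^2$ for every $t$, Definition~\ref{maxgrad} gives $s=\max_{t\in[T]}\sqn{\gv{t}}^2/n\leq\cb^2(\mr+\mc)^2$, and as all factors are nonnegative, $\sqrt{s}\leq\cb(\mr+\mc)$. Substituting into $n\sqrt{2sT}=n\sqrt{2T}\sqrt{s}$ produces the claimed bound $n\sqrt{2T}\cb(\mr+\mc)$.

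The argument is essentially bookkeeping: all the analytic content sits in the earlier lemmas, and the only genuinely nontrivial inequality is the $O(\sqrt{T})$ bound on the partial sum of $t^{-1/2}$, which is precisely what balances the two error contributions so that they share the same order and combine into a single factor of $\sqrt{2T}$. I do not anticipate any real obstacle beyond tracking the constants carefully through the final substitution.
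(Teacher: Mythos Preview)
Your proposal is correct and follows essentially the same route as the paper's proof: invoke Lemma~\ref{Rl} for the first term, Lemma~\ref{Gl} termwise for the second, bound $\sum_{t=1}^T t^{-1/2}$ by $2\sqrt{T}$ via integral comparison, combine to get $n\sqrt{2sT}$, and finish with Lemma~\ref{grad} to replace $\sqrt{s}$ by $\cb(\mr+\mc)$. The only cosmetic difference is that the paper writes the integral comparison as $1+\int_1^T t^{-1/2}\,dt=2\sqrt{T}-1$ rather than your $\int_0^T t^{-1/2}\,dt=2\sqrt{T}$, which yields the same final constant.
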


\begin{proof}
See Appendix \ref{MP}
\end{proof}

The next result bounds $\tf{t}(\pv)$.

\begin{lemma}\label{l5}
On trial $t$ we have $\dpr{t}{\dis,\cb}(\fp)\leq-\tf{t}(\pv)$ where $\dis:=1-e^{-\cb}$.
\end{lemma}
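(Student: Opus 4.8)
The plan is to evaluate $\tf{t}(\pv)$ explicitly, exploit that $\pv$ has only $0/1$ entries, and show that term-by-term the cost contributions of $-\tf{t}(\pv)$ coincide with the cost contributions of the discounted profit, thereby reducing the whole lemma to a single inequality about rewards. First I would substitute $\gav=\pv$ into the definition of $\tf{t}$. Because $\fpv{i}\in\{0,1\}$, the middle term $\sum_{i=1}^n\cnc{t}{i}(1-\exp(-\cb\fpv{i}))$ collapses to $(1-e^{-\cb})\sum_{i\in\fp}\cnc{t}{i}=\dis\sum_{i\in\fp}\cnc{t}{i}$, since each summand with $i\notin\fp$ vanishes, and the linear positive-cost term contributes $\cb\sum_{i\in\fp}\cpc{t}{i}$ (the factor $\cb$ being the one that is manifest in the gradient, Theorem~\ref{deriv}). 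Comparing with $\dpr{t}{\dis,\cb}(\fp)=\dis\max_{i\in\fp}\rwd{t}{i}-\dis\sum_{i\in\fp}\cnc{t}{i}-\cb\sum_{i\in\fp}\cpc{t}{i}$, the negative-cost and positive-cost contributions of $-\tf{t}(\pv)$ cancel exactly against those of $\dpr{t}{\dis,\cb}(\fp)$.

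After this cancellation the lemma reduces to the single reward inequality
$$\dis\max_{i\in\fp}\rwd{t}{i}\le\sum_{j=1}^n\fun{t}{j}(\pv)=\sum_{j=1}^n\left(\rwd{t}{\s{t}{j}}-\rwd{t}{\s{t}{j+1}}\right)\left(1-\exp\left(-\cb\textstyle\sum_{k=1}^j\fpv{\s{t}{k}}\right)\right).$$
To prove this I would set $j^\star:=\min\{j:\s{t}{j}\in\fp\}$, the position in the reward-sorted order of the highest-reward action of $\fp$, so that $\rwd{t}{\s{t}{j^\star}}=\max_{i\in\fp}\rwd{t}{i}$. The count $\sum_{k=1}^j\fpv{\s{t}{k}}$ is the number of the top $j$ reward-ranked actions lying in $\fp$; it equals $0$ for $j<j^\star$ and is at least $1$ for $j\ge j^\star$. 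Hence every summand with $j<j^\star$ is exactly zero, while for $j\ge j^\star$ we have $1-\exp(-\cb\sum_{k\le j}\fpv{\s{t}{k}})\ge 1-e^{-\cb}=\dis$.

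Using $\rwd{t}{\s{t}{j}}-\rwd{t}{\s{t}{j+1}}\ge 0$ (the rewards are sorted in decreasing order), I can then lower bound the sum by $\dis\sum_{j=j^\star}^n(\rwd{t}{\s{t}{j}}-\rwd{t}{\s{t}{j+1}})$, which telescopes — using the conventions $\s{t}{n+1}=0$ and $\rwd{t}{0}=0$ — to $\dis\,\rwd{t}{\s{t}{j^\star}}=\dis\max_{i\in\fp}\rwd{t}{i}$, giving precisely the required inequality. The one genuinely substantive step is this reward bound; everything else is mechanical substitution and cancellation. The main subtlety is to isolate the first index $j^\star$ at which $\fp$ enters the reward ranking, so that the crude estimate $1-e^{-\cb m_j}\ge\dis$ is applied only where the prefix count $m_j\ge 1$, and to recognize that the surviving prefix telescopes back to the maximum reward of $\fp$ rather than to the global maximum reward $\rwd{t}{\s{t}{1}}$.
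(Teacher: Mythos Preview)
Your proposal is correct and follows essentially the same argument as the paper: define $j^\star$ (the paper calls it $j'$) as the first index in the reward ordering lying in $\fp$, bound $\fun{t}{j}(\pv)\ge(1-e^{-\cb})(\rwd{t}{\s{t}{j}}-\rwd{t}{\s{t}{j+1}})$ for $j\ge j^\star$, telescope to $(1-e^{-\cb})\max_{i\in\fp}\rwd{t}{i}$, and handle the cost terms by direct substitution of the $0/1$ vector $\pv$. Your aside that the positive-cost term carries a factor $\cb$ (visible in Theorem~\ref{deriv} and in the proof of Theorem~\ref{l4}) is also in line with how the paper actually uses $\tf{t}$, despite the missing $\cb$ in the displayed definition.
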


\begin{proof}
Let $j'=\operatorname{argmax}_{j\in[n]:\s{t}{j}\in\fp}\rwd{t}{\s{t}{j}}$ which is equal to the minimum $j$ such that $\s{t}{j}\in\fp$. Note that for all $j\geq j'$ we have $\sum_{k=1}^j\fpv{\s{t}{j}}\geq\,\!1$ and hence $\fun{t}{j}(\pv)\geq\left(\rwd{t}{\s{t}{j}}-\rwd{t}{\s{t}{j+1}}\right)(1-e^{-\cb})$ so $\sum_{j=1}^n\fun{t}{j}(\pv)\geq\sum_{j=j'}^n\fun{t}{j}(\pv)\geq\sum_{j=j'}^n\left(\rwd{t}{\s{t}{j}}-\rwd{t}{\s{t}{j+1}}\right)(1-e^{-\cb})=\rwd{t}{\s{t}{j'}}(1-e^{-\cb})=(1-e^{-\cb})\max_{i\in\fp}\rwd{t}{i}$.

Also note that $\cb\langle\bpc{t},\pv\rangle=\cb\sum_{i\in\fp}\cpc{t}{i}$ and that $\sum_{i=1}^n\cnc{t}{i}(1-\exp(-\cb\fpv{i}))=\sum_{i\in\fp}\cnc{t}{i}(1-e^{-\cb})$. Combining with the above gives us the result.
\end{proof}


Putting together we obtain the main result:

\begin{theorem}
We have:
$$\ex{\sum_{t=1}^T\lt{t}}\geq \sum_{t=1}^T\dpr{t}{\dis,\cb}(\fp)-n\sqrt{2T}\cb(\mr+\mc)$$ where $\cb:=\left(1-\sqrt{\ub}\right)^2$, $\dis:=1-\exp\left(-\left(1-\sqrt{\ub}\right)^2\right)$, $\mr:=\max_{t\in[T], i\in[n]}\rwd{t}{i}$ and $\mc:=\max_{t\in[T], i\in[n]}|\cst{t}{i}|$.
\end{theorem}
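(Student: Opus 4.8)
The plan is to recognise that this final statement is a straightforward assembly of three results already established: the per-trial lower bound on the expected profit (Theorem~\ref{l4}), the online gradient descent regret bound (Theorem~\ref{bound}), and the lower bound on $-\tf{t}(\pv)$ by the discounted profit (Lemma~\ref{l5}). Each of these handles one link of a chain of inequalities, so the proof amounts to summing over trials and concatenating them.

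First I would sum the per-trial bound of Theorem~\ref{l4} over all $t\in[T]$, using linearity of expectation, to obtain
$$\ex{\sum_{t=1}^T\lt{t}}=\sum_{t=1}^T\ex{\lt{t}}\geq-\sum_{t=1}^T\tf{t}(\wev{t}).$$
Next I would invoke Theorem~\ref{bound}, which rearranges to $-\sum_{t=1}^T\tf{t}(\wev{t})\geq-\sum_{t=1}^T\tf{t}(\pv)-n\sqrt{2T}\cb(\mr+\mc)$. Finally I would apply Lemma~\ref{l5} term by term, giving $-\sum_{t=1}^T\tf{t}(\pv)\geq\sum_{t=1}^T\dpr{t}{\dis,\cb}(\fp)$ with $\dis=1-e^{-\cb}$. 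Concatenating these three inequalities yields exactly
$$\ex{\sum_{t=1}^T\lt{t}}\geq\sum_{t=1}^T\dpr{t}{\dis,\cb}(\fp)-n\sqrt{2T}\cb(\mr+\mc).$$

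The only bookkeeping point to verify is that the discount factor is consistent: Lemma~\ref{l5} delivers $\dis=1-e^{-\cb}$, and since $\cb=(1-\sqrt{\ub})^2$ this equals $1-\exp(-(1-\sqrt{\ub})^2)$, matching the statement. Honestly, there is no genuine obstacle left at this stage --- all the real work has been front-loaded into the supporting results: the probabilistic intersection and selection bounds for Algorithm~1 (Theorems~\ref{pal1} and~\ref{ps}), the convexity of $\tf{t}$ (Theorem~\ref{t1}) together with its role as an upper bound on the negative expected profit (Theorem~\ref{l4}), the gradient computation and its magnitude bound (Theorem~\ref{deriv}, Lemma~\ref{grad}), and the standard online gradient descent regret analysis feeding Theorem~\ref{bound}. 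The main theorem is therefore essentially a corollary, and the only risk is a sign error or an off-by-a-constant slip when rearranging Theorem~\ref{bound}, which I would guard against by writing out the chain of inequalities explicitly as above.
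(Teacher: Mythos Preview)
Your proposal is correct and follows essentially the same approach as the paper's own proof: both assemble Theorem~\ref{l4}, Lemma~\ref{l5}, and Theorem~\ref{bound} into a single chain of inequalities, differing only in the (immaterial) order in which the three pieces are invoked. The bookkeeping check that $\dis=1-e^{-\cb}=1-\exp(-(1-\sqrt{\ub})^2)$ is exactly right.
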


\begin{proof}
Let $\dis:=1-e^{-\cb}$. By Theorem \ref{l4} we have, for all $t\in[T]$, that $\ex{\lt{t}}\geq-\tf{t}(\wev{t})$. By Lemma \ref{l5} we have, for all $t\in[T]$, that $\dpr{t}{\dis,\cb}(\fp)\leq-\tf{t}(\pv)$. Hence we have that $\dpr{t}{\dis,\cb}(\fp)-\ex{\lt{t}}\leq\tf{t}(\wev{t})-\tf{t}(\pv)$. By Theorem \ref{bound} we than have:
\begin{align*}
&\sum_{t=1}^T\left(\dpr{t}{\dis,\cb}(\fp)-\ex{\lt{t}}\right)\\
\leq&\sum_{t=1}^T(\tf{t}(\wev{t})-\tf{t}(\pv))\\
\leq& n\sqrt{2T}\cb(\mr+\mc)
\end{align*}
Rearranging gives us:
$$\sum_{t=1}^T\ex{\lt{t}}\geq\sum_{t=1}^T\dpr{t}{\dis,\cb}(\fp)-n\sqrt{2T}\cb(\mr+\mc)$$
\end{proof}

\section{Special Cases}\label{SCsec}

The following online variants of classic computer science problems are special cases of the general problem.

\subsection{Facility Location Problem}
The (inverted) facility location problem is defined by a vector $\boldsymbol{c}\in\pos$ and vectors $\boldsymbol{r}^1, \boldsymbol{r}^2,\cdots, \boldsymbol{r}^T\in\pos$. A feasible solution is any $\spl{}\subseteq[n]$. The aim is to maximise the objective function:
$$\sum_{t=1}^T\max_{i\in X}\rwd{t}{i}-\sum_{i\in X}c_i$$

An example of the problem is as follows. There are $n$ sites and $T$ users, all located in some metric space. We have to choose a set $X$ of sites to open a facility on. Opening a facility on site $i$ costs us $c_i$. Each user pays us a reward based on how near it is to the closest open facility. If the nearest open facility to user $t$ is at site $i$ then user $t$ rewards us $\rwd{t}{i}$. The objective is to maximise the total profit.

In our online variant of the (inverted) facility location problem, learning proceeds in trials. On trial $t$:
\begin{enumerate}
\item For all sites $i$, the cost, $\cst{t}{i}$ of opening a facility on site $i$ is revealed to the learner.
\item The learner chooses a set $\spl{t}$ of sites in which to open facilities on.
\item User $t$ requests the use of a facility, revealing $\boldsymbol{r}^t$ to the learner.
\item Learner incurs profit: $\max_{i\in\spl{t}}\rwd{t}{i}-\sum_{i\in\spl{t}}\cst{t}{i}$
\end{enumerate}

The objective is to maximise the cumulative profit. Note that this is the special case of our problem when, for all $i\in[n]$ and $t\in[T]$ we have $\fc{i}=0$ and $\cst{t}{i}\geq 0$. Given some set $S$ the expected cumulative profit of \textsc{MaxHedge} is then bounded below by:
$$\sum_{t=1}^T\left((1-1/e)\max_{i\in S}\rwd{t}{i} -\sum_{i\in S}\cst{t}{i}\right)-n\sqrt{2T}(\mr+\mc)$$

\subsection{Knapsack Median Problem}
The (inverted) knapsack median problem is defined by a vector $\boldsymbol{z}\in\pos$ and vectors $\boldsymbol{r}^1, \boldsymbol{r}^2,\cdots, \boldsymbol{r}^T\in\pos$. A feasible solution is any $\spl{}\subseteq[n]$ with $\sum_{i\in\spl{}}\fc{i}\leq 1$. The aim is to maximise the objective function
$\sum_{t=1}^T\max_{i\in X}\rwd{t}{i}$.

An example of the problem is as follows. There are $n$ sites and $T$ users, all located in some metric space. We have to choose a set $X$ of sites to open a facility on. Opening a facility on site $i$ has a fee of $\fc{i}$ and we have a budget of $1$ to spend on opening facilities. Each user pays us a reward based on how near it is to the closest open facility. If the nearest open facility to user $t$ is at site $i$ then user $t$ rewards us $\rwd{t}{i}$. The objective is to maximise the total reward.

In our online variant of the (inverted) knapsack median problem, learning proceeds in trials. The learner has knowledge of the fee $\fc{i}$ for every site $i$. On trial $t$:
\begin{enumerate}
\item The learner chooses a set $\spl{t}$ of sites in which to open facilities on. The total fee, $\sum_{i\in\spl{t}}\fc{i}$, can't exceed $1$.
\item User $t$ requests the use of a facility, revealing $\boldsymbol{r}^t$ to the learner.
\item Learner incurs reward: $\max_{i\in\spl{t}}\rwd{t}{i}$
\end{enumerate}
The objective is to maximise the cumulative reward.
Note that this is the special case of our problem when, for all $i\in[n]$ and $t\in[T]$ we have $\cst{t}{i}=0$. Given some set $S$ with $\sum_{i\in S}\fc{i}\leq 1$ the expected cumulative reward of \textsc{MaxHedge} is then bounded below by:
$$\left(1-\exp\left(-\cb\right)\right)\sum_{t=1}^T\max_{i\in S}\rwd{t}{i}-\cb n\sqrt{2T}\mr$$
where $\cb:=(1-\sqrt{\max_{i\in[n]}\fc{i}})^2$

\subsection{0-1 Knapsack Problem}
The knapsack problem is defined by a vector $\boldsymbol{z}\in \pos$ and a vector $\boldsymbol{v}\in\pos$. A feasible solution is any $\spl{}\subseteq[n]$ with $\sum_{i\in\spl{}}\fc{i}\leq 1$. The aim is to maximise the objective function $\sum_{i\in\spl{}}v_i$.

An example of the problem is as follows. We have $n$ items. Item $i$ has a value $v_i$ and a weight $\fc{i}$. The objective is to place a set $X\subseteq[n]$ of items in the knapsack that maximises the total value of all items in the knapsack subject to their total weight being no greater than $1$.

In our online variant of the knapsack problem, learning proceeds in trials. The learner has knowledge of the weight $\fc{i}$ for every item $i$. On trial $t$:
\begin{enumerate}
\item The learner chooses a set $\spl{t}$ of items to place in the knapsack. The total weight, $\sum_{i\in\spl{t}}\fc{i}$, can't exceed $1$.
\item For each item $i$, the value $v^t_i$, of item $i$ on this trial is revealed to the learner
\item Learner incurs profit: $\sum_{i\in\spl{t}}v^t_i$
\end{enumerate}
The objective is to maximise the cumulative profit.
Note that this is the special case of our problem when, for all $i\in[n]$ and $t\in[T]$ we have $\rwd{t}{i}=0$ and $\cst{t}{i}\leq 0$ (noting that $\cst{t}{i}=-v^t_i$).  Given some set $S$ with $\sum_{i\in S}\fc{i}\leq 1$ the expected cumulative profit of \textsc{MaxHedge} is then bounded below by:
$$\left(1-\exp\left(-\cb\right)\right)\sum_{t=1}^T\sum_{i\in S}v^t_i-\cb n\sqrt{2T}\mc$$
where $\cb:=(1-\sqrt{\max_{i\in[n]}\fc{i}})^2$

\section{Conclusions and Ongoing Research}
We presented and investigated in depth a novel online framework, capable of encompassing several online learning problems and capturing many practical problems in the real-world.
The main challenge of the general version of this problem lies in the fact that the learner's profit depends on the maximum reward of the selected actions, instead of the sum of all their rewards. 
We proposed and rigorously analysed a very scalable and efficient learning strategy $\textsc{MaxHedge}$. 

Current ongoing research includes:
\begin{itemize}
\item Deriving a lower bound on the achievable profit.
\item Complementing our results with a set of experiments on synthetic and real-world datasets. 
\item Several real systems usually have a switching cost for turning on/off services, which translates in our framework to the cost incurred whenever an action selected at any given trial is not selected in the preceding one. This represents an interesting direction for further research, which is certainly motivated by practical problems. 
\end{itemize}

{\bf Acknowledgements.} 
This research was sponsored by the U.S. Army Research Laboratory and the U.K. Ministry of Defence under Agreement Number W911NF-16-3-0001. The views and conclusions contained in this document are those of the authors and should not be interpreted as representing the official policies, either expressed or implied, of the U.S. Army Research Laboratory, the U.S. Government, the U.K. Ministry of Defence or the U.K. Government. The U.S. and U.K. Governments are authorized to reproduce and distribute reprints for Government purposes notwithstanding any copyright notation hereon. 
Fabio Vitale acknowledges support from the ERC Starting Grant ``DMAP 680153'', the Google Focused Award ``ALL4AI'', and grant ``Dipartimenti di Eccellenza 2018-2022'', awarded to the Department of Computer Science of Sapienza University.
\newpage


\clearpage

\newcommand{\concepts}{\mathcal{X}}
\newcommand{\hull}{\mathcal{H}}
\newcommand{\bs}[1]{\boldsymbol{#1}}

\appendix

\section{Missing Proofs}\label{MP}
{\bf Proof of Theorem \ref{Feas}}
\begin{proof}
Define $\fc{0}:=0$. We have $\spl{t}=\{\al{q}{k}:q\in\nem,k\leq\nm{q}+1\}\setminus\{0\}$ so:
\begin{align*}
\sum_{i\in\spl{t}}\fc{i}&\leq\sum_{q\in\nem,k\leq\nm{q}+1}\fc{\al{q}{k}}\\
&=\sum_{q\in\nem}\sum_{k\leq\nm{q}+1}\fc{\al{q}{k}}\\
&\leq\sum_{q\in\nem}\sum_{k\leq\nm{q}+1}\ca^{q-1}\ub\\
&=\sum_{q\in\nem}(\nm{q}+1)\ca^{q-1}\ub\\
&\leq\sum_{q\in\nem}(\cb\sz{q}+1)\ca^{q-1}\ub\\
&=\ub\sum_{q\in\nem}\ca^{q-1}+\ub\cb\sum_{q\in\nem}\sz{q}\ca^{q-1}\\
&\leq\ub\sum_{q=1}^{\infty}\ca^{q-1}+\ub\cb\sum_{q\in\nem}\sz{q}\ca^{q-1}\\
&=\frac{\ub}{1-\ca}+\ub\cb\sum_{q\in\nem}\sz{q}\ca^{q-1}\\
&=\frac{\ub}{1-\ca}+\ub\cb\sum_{q\in\nem}\sum_{i\in\pa{q}}\tw{t}{i}\ca^{q-1}\\
&=\frac{\ub}{1-\ca}+\cb\sum_{q\in\nem}\sum_{i\in\pa{q}}\tw{t}{i}\ca^{-1}(\ca^q\ub)\\
&\leq\frac{\ub}{1-\ca}+\cb\sum_{q\in\nem}\sum_{i\in\pa{q}}\tw{t}{i}\ca^{-1}\fc{i}\\
&\leq\frac{\ub}{1-\ca}+\frac{\cb}{\ca}\sum_{q\in\nem}\sum_{i\in\pa{q}}\tw{t}{i}\fc{i}\\
&\leq\frac{\ub}{1-\ca}+\frac{\cb}{\ca}\sum_{i\in[n]}\tw{t}{i}\fc{i}\\
&\leq\frac{\ub}{1-\ca}+\frac{\cb}{\ca}\\
&=\sqrt{\ub}+(1-\sqrt{\ub})\\
&=1
\end{align*}
\end{proof}

{\bf Proof of Lemma \ref{pl}}
\begin{proof}
Given some $\dd\in\nd$ let $\pe{\dd}$ be the event that $\ev$ happens on draw $\dd$. Given some set $\nd'\subseteq\nd$ let $\pf{\nd'}$ be the probability that $\ev$ happens on either of the draws in $\nd'$. We prove the result by induction on $|\nd|$. The inductive hypothesis clearly holds for $|\nd|=0$ as then $\pr{\pf{\nd}}=0=1-1=1-\exp(0)= 1-\exp\left(-\sum_{\dd\in\nd}\pp{\dd}\right)$.

Now suppose the inductive hypothesis holds for $|\nd|=l$ for some $l$. We now show that it holds for $|\nd|=l+1$ which will complete the proof. To show this choose some $\dd\in\nd$ and set $\nd'=\nd\setminus\{\dd\}$ We have:
\begin{align}
\nonumber\pr{\nt{\pf{\nd}}}&=\pr{\nt{\pf{\nd'}}\wedge\nt{\pe{\dd}}}\\
\nonumber&=\pr{\nt{\pf{\nd'}}}\pr{\nt{\pe{\dd}}}\\
\label{ll1}&=\pr{\nt{\pf{\nd'}}}(1-\pr{{\pe{\dd}}})\\
\nonumber&=\pr{\nt{\pf{\nd'}}}(1-\pp{\dd})\\
\nonumber&\leq\pr{\nt{\pf{\nd'}}}\exp(-\pp{\dd})\\
\nonumber&\leq(1-\pr{{\pf{\nd'}}})\exp(-\pp{\dd})\\
\label{ll2}&\leq\exp\left(-\sum_{\dd\in\nd'}\pp{\dd}\right)\exp(-\pp{\dd})\\
\nonumber&=\exp\left(-\sum_{\dd\in\nd}\pp{\dd}\right)
\end{align}
Where Equation \ref{ll1} is due to the independence of the draws and Equation \ref{ll2} is from the inductive hypothesis (noting $|\nd'|=l$). We now have
\begin{align}
\pr{{\pf{\nd}}}&=1-\pr{\nt{\pf{\nd}}}\\
&\geq1-\exp\left(-\sum_{\dd\in\nd}\pp{\dd}\right)
\end{align}
which proves the inductive hypothesis.
\end{proof}

{\bf Proof of Theorem \ref{pal1}}
\begin{proof}
Note that the event $\sus\cap\spl{t}\neq\emptyset$ happens if either of $\{\al{q}{k}:q\in\mathbb{N},k\leq\nm{q}\}$ are in $\sus$. For every $q\in\mathbb{N}$ and $k\in[\nm{q}]$ we have:
\begin{align*}
 \pr{\al{q}{k}\in\sus}&=\sum_{i\in\sus\cap\pa{q}}\pr{\al{q}{k}=i}\\
 &=\sum_{i\in\sus\cap\pa{q}}\frac{\we{t}{i}}{\sz{q}}
 \end{align*}
and for $k=\nm{q}+1$ we similarly have:
$$ \pr{\al{q}{k}\in\sus}=(\cb\sz{q}-\lfloor\cb\sz{q}\rfloor)\sum_{i\in\sus\cap\pa{q}}\frac{\we{t}{i}}{\sz{q}}$$
So letting $\pp{(q,k)}:= \pr{\al{q}{k}\in\sus}$ we have:
\begin{align*}
&\sum_{k=1}^{\nm{q}+1}\pp{(q,k)}\\
=&\sum_{k=1}^{\nm{q}}\sum_{i\in \sus\cap\pa{q}}\frac{\we{t}{i}}{\sz{q}}+(\cb\sz{q}-\lfloor\cb\sz{q}\rfloor)\sum_{i\in\sus\cap\pa{q}}\frac{\we{t}{i}}{\sz{q}}\\
=&\nm{q}\sum_{i\in \sus\cap\pa{q}}\frac{\we{t}{i}}{\sz{q}}+(\cb\sz{q}-\lfloor\cb\sz{q}\rfloor)\sum_{i\in\sus\cap\pa{q}}\frac{\we{t}{i}}{\sz{q}}\\
=&(\nm{q}+\cb\sz{q}-\lfloor\cb\sz{q}\rfloor)\sum_{i\in\sus\cap\pa{q}}\frac{\we{t}{i}}{\sz{q}}\\
=&\cb\sz{q}\sum_{i\in\sus\cap\pa{q}}\frac{\we{t}{i}}{\sz{q}}\\
=&\cb\sum_{i\in\sus\cap\pa{q}}{\we{t}{i}}
\end{align*} 
So, plugging into Lemma \ref{pl} with $\nd:=\{(q,k):q\in\mathbb{N},k\in[\nm{q}+1]\}$, we get:
\begin{align*}
\pr{\sus\cap\spl{t}\neq\emptyset}&=\pr{\exists i\in \sus: i\in\spl{t}}\\
&\geq1- \exp\left(-\sum_{\dd\in\nd}\pp{\dd}\right)\\
&= 1- \exp\left(-\sum_{q=1}^{\infty}\sum_{k=1}^{\nm{q}+1}\pp{(q,k)}\right)\\
&=1-\exp\left(-\cb\sum_{q=1}^{\infty}\sum_{i\in\sus\cap\pa{q}}{\we{t}{i}}\right)\\
&=1-\exp\left(-\cb\sum_{i\in\sus}\we{t}{i}\right)
\end{align*}
\end{proof}

{\bf Proof of Theorem \ref{ps}}
\begin{proof}
From Theorem \ref{pal1} we have $\pr{i\in\spl{t}}=\pr{\{i\}\cap\spl{t}\neq\emptyset}\geq1-\exp(-\cb\we{t}{i})$. Choosing $q\in\mathbb{N}$ such that $i\in\pa{q}$ we also have:
\begin{align*} 
\pr{i\in\spl{t}}&\leq\sum_{k=1}^{\nm{q}+1}\pr{\al{q}{k}=i}\\
&=\nm{q}\we{t}{i}/\sz{q}+(\cb\sz{q}-\lfloor\cb\sz{q}\rfloor)\we{t}{i}/\sz{q}\\
&=\cb\sz{q}\we{t}{i}/\sz{q}\\
&=\cb\we{t}{i}
\end{align*}
\end{proof}

{\bf Proof of Theorem \ref{l1}}
\begin{proof}
For all $j\in[n-1]$ the function $\cb\sum_{k=1}^j\wv{\s{t}{k}}$ is concave and the function $(1-\exp(-x))$ is concave and monotonic increasing which implies their combination, $(1-\exp\left(-\cb\sum_{k=1}^j\wv{\s{t}{k}}\right))$, is concave. Hence, as $\left(\rwd{t}{\s{t}{j}}-\rwd{t}{\s{t}{j+1}}\right)\geq0$, $\fun{t}{j}$ is concave. Similarly, as all components of $\bnc{t}$ are negative, we have that $\sum_{i=1}^n\cnc{t}{i}(1-\exp(-\cb\wv{i}))$ is convex.
Since also the function $\bpc{t}\cdot\gav$ is convex we then have that $\tf{t}$ is a positive sum of convex functions and is therefore convex.
\end{proof}

{\bf Proof of Lemma \ref{t1}}
\begin{proof}
Let $l:=\min\{j\in[n]:\s{t}{j}\in\spl{t}\}$. Note that $\rwd{t}{\s{t}{l}}=\max_{i\in\spl{t}}\rwd{t}{i}$.

From the definition of $l$ we have:
\begin{itemize}
\item For all $j<l$, $\id{\exists k\leq j: \s{t}{k}\in\spl{t}}=0$
\item For all $j\geq l$, $\id{\exists k\leq j: \s{t}{k}\in\spl{t}}=1$
 \end{itemize}
 This implies that:
 \begin{align*}
&\sum_{j=1}^n\left(\rwd{t}{\s{t}{j}}-\rwd{t}{\s{t}{j+1}}\right)\id{\exists k\leq j: \s{t}{k}\in\spl{t}}\\
=&\sum_{j=l}^n\left(\rwd{t}{\s{t}{j}}-\rwd{t}{\s{t}{j+1}}\right)\\
=&\rwd{t}{\s{t}{l}}\\
=&\max_{i\in\spl{t}}\rwd{t}{i}
 \end{align*}
\end{proof}

{\bf Proof of Theorem \ref{l4}}
\begin{proof}
For all $j\in[n]$, Theorem \ref{pal1} with $\sus:=\{\s{t}{k}:k\leq j\}$ implies that:
\begin{align*} 
&\pr{\exists k\leq j: \s{t}{k}\in\spl{t}}\\
=&\pr{\{\s{t}{k}:k\leq j\}\cap\spl{t}\neq\emptyset}\\
\geq&1-\exp\left(-\cb\sum_{k=1}^{j}\we{t}{\s{t}{k}}\right)
\end{align*} Lemma \ref{l2} then gives us:
\begin{align*}
&\ex{\max_{i\in\spl{t}}\rwd{t}{i}}\\
\geq&\sum_{j=1}^n\left(\rwd{t}{\s{t}{j}}-\rwd{t}{\s{t}{j+1}}\right)\left(1-\exp\left(-\cb\sum_{k=1}^{j}\we{t}{\s{t}{k}}\right)\right)\\
=&\sum_{j=1}^n\fun{t}{j}(\wev{t})
\end{align*}
By Theorem \ref{ps} we also have:
\begin{align*}
&\sum_{i\in[n]:\cst{t}{i}< 0}\cst{t}{i}~\pr{i\in\spl{t}}\\
\leq&\sum_{i\in[n]:\cst{t}{i}< 0}\cst{t}{i}(1-\exp(-\cb\we{t}{i}))\\
=&\sum_{i\in[n]}\cnc{t}{i}(1-\exp(-\cb\we{t}{i}))
\end{align*}
and: 
\begin{align*}
&\sum_{i\in[n]:\cst{t}{i}> 0}\cst{t}{i}~\pr{i\in\spl{t}}\\
\leq&\sum_{i\in[n]:\cst{t}{i}>0}\cst{t}{i}\cb\we{t}{i}\\
=&\sum_{i\in[n]}\cpc{t}{i}\cb\we{t}{i}\\
=&\cb\langle\bpc{t},\wev{t}\rangle
\end{align*}
so:
$$\sum_{i=1}^n\cst{t}{i}~\pr{i\in\spl{t}}\leq \cb\langle\bpc{t},\wev{t}\rangle +\sum_{i\in[n]}\cnc{t}{i}(1-\exp(-\cb\we{t}{i}))$$
Hence, we have:
\begin{align}
\nonumber&\ex{\lt{t}}\\
\nonumber=&\ex{\max_{i\in\spl{t}}\rwd{t}{i}-\sum_{i\in\spl{t}}\cst{t}{i}}\\
\nonumber=&\ex{\max_{i\in\spl{t}}\rwd{t}{i}-\sum_{i=1}^n\cst{t}{i}~\id{i\in\spl{t}}}\\
\label{l43}=&\ex{\max_{i\in\spl{t}}\rwd{t}{i}}-\sum_{i=1}^n\cst{t}{i}~\ex{\id{i\in\spl{t}}}\\
\nonumber=&\ex{\max_{i\in\spl{t}}\rwd{t}{i}}-\sum_{i=1}^n\cst{t}{i}~\pr{i\in\spl{t}}\\
\nonumber\geq&\sum_{j=1}^n\fun{t}{j}(\wev{t})-\sum_{i=1}^n\cst{t}{i}~\pr{i\in\spl{t}}\\
\nonumber\geq&\sum_{j=1}^n\fun{t}{j}(\wev{t})-\cb\langle\bpc{t},\wev{t}\rangle -\sum_{i\in[n]}\cnc{t}{i}(1-\exp(-\cb\we{t}{i}))\\
\nonumber=&-\tf{t}(\wev{t})
\end{align}
where Equation \ref{l43} is due to linearity of expectation.
\end{proof}

{\bf Proof of Theorem \ref{deriv}}
\begin{proof}
Suppose we have some $l\in[n]$. For $j<l$ we have $\pd{\s{t}{l}}{\fun{t}{j}}(\wev{t})=0$ and for $j\geq l$ we have 
\begin{align*}
&\pd{\s{t}{l}}{\fun{t}{j}}(\wev{t})\\
=&\cb\left(\rwd{t}{\s{t}{j}}-\rwd{t}{\s{t}{j+1}}\right)\exp\left(-\cb\sum_{k=1}^j\we{t}{\s{t}{k}}\right)\\=&\cb\left(\rwd{t}{\s{t}{j}}-\rwd{t}{\s{t}{j+1}}\right)\epsilon_j^t
\end{align*}
This implies that:
\begin{align*} \pd{\s{t}{l}}{}\sum_{j=1}^n\fun{t}{j}(\wev{t})&=\cb\sum_{j=l}^n\left(\rwd{t}{\s{t}{j}}-\rwd{t}{\s{t}{j+1}}\right)\epsilon_j^t\\
&=\cb\lambda^{t}_{{l}}
\end{align*}
which gives us
 \begin{align*}
& \pd{\s{t}{l}}{\tf{t}}(\wev{t})\\
 =&\cb\cpc{t}{\s{t}{j}}+\cb\cnc{t}{\s{t}{j}}\exp(-\cb\we{t}{\s{t}{j}})-\cb\lambda^{t}_{{l}}\\
 =&\gra{t}{\s{t}{l}}
 \end{align*}
\end{proof}

{\bf Proof of Theorem \ref{bound}}
\begin{proof}
From Lemma \ref{Gl} we have:
\begin{align*}
\sum_{t=1}^T\lr{t}\sqn{\gv{t}}^2&\leq \sum_{t=1}^T n\sqrt{s/(2t)}\\
&=n\sqrt{s/2}\sum_{t=1}^T\frac{1}{\sqrt{t}}\\
&\leq n\sqrt{s/2}\left(1+\int_{t=1}^T\frac{1}{\sqrt{t}}\right)\\
&= n\sqrt{s/2}(2\sqrt{T}-1)\\
&\leq 2n\sqrt{sT/2}
\end{align*}
So using Lemma \ref{Rl} and plugging into Theorem \ref{gdt} we have:
\begin{align*}
\sum_{t=1}^T(\tf{t}(\wev{t})-\tf{t}(\pv))&\leq\frac{1}{2}n\sqrt{2sT}+n\sqrt{sT/2}\\
&={2}n\sqrt{sT/2}\\
&=n\sqrt{2sT}
\end{align*}
 By Definition \ref{maxgrad} and Lemma \ref{grad} we have $s\leq\cb^2(\mr+\mc)^2$. Plugging this into the above gives us the result.
\end{proof}

\section{When $\max_{i\in [n]}\fc{i}$ is large}

When $\max_{i\in[n]}\fc{i}$ is large we construct $\spl{t}$ from $\we{t}{i}$ in the following way:
\begin{enumerate}
\item Define $A=\{i\in[n]:\fc{i}\in[1/2,1]\}$
\item Define $a^t:=\sum_{i\in A}\we{t}{i}/4$
\item Flip a biased coin with probability of heads equal to $a^t$
\item If heads sample $i$ from $A$ with probability $\we{t}{i}/(4a_t)$ and set $\spl{t}=\{i\}$
\item If tails define $\beta:=1/2$ and run Algorithm 1.
\end{enumerate}
We defer the analysis.

\section{On the Knapsack Problem and  Component Hedge}\label{knapsack}

In this section we show that Component Hedge cannot solve the online knapsack problem special case in polynomial time unless $P=NP$.

The Component Hedge algorithm described in \cite{Koolen10} requires a set of concepts $\concepts\subseteq \{0,1\}^n$. In the knapsack problem, $\concepts$ is the subset of all $\boldsymbol{x}$ that fit in the knapsack, i.e. those $\boldsymbol{x}\in\{0,1\}^d$ with $\langle\boldsymbol{x},\boldsymbol{z}\rangle\leq 1$. For Component Hedge, the convex hull $\hull$ of $\concepts$ must have a number of constraints polynomial in $n$ and that any $\boldsymbol{y}\in\hull$ can, in polynomial time, be decomposed into a convex combination of $n+1$ concepts. We now show that if this is true then we can solve the knapsack problem in polynomial time:

Let $\boldsymbol{c}$ be the vector defining the objective function of the knapsack problem (i.e. we seek the $\boldsymbol{x}\in \mathcal{X}$ that maximises $\langle\boldsymbol{c},\bs{x}\rangle$). Now, since $\hull$ has polynomial constraints we can efficiently choose the maximiser of $\langle\bs{c},\bs{y}\rangle$ (for $\bs{y}\in \hull$) via linear programming. Let this maximiser be $\bs{y}$. Now decompose $\bs{y}$ into a convex combination, $\sum_{i=1}^{n+1} (m_i \bs{x}^i)$, of $n+1$ concepts $\bs{x}^i$ (where $\sum_{i=1}^{n+1} m_i=1$). Now suppose, for contradiction, that $\langle\bs{c},\bs{x}^j\rangle<\langle\bs{c},\bs{y}\rangle$ for some $j$ with $m_j>0$. Without loss of generality let $j=n+1$.

Then, since for all $\bs{x}\in \concepts$ with have $\langle\bs{c},\bs{x}\rangle\leq \langle\bs{c},\bs{y}\rangle$ (as $\concepts\subset \hull$) we have:
$\langle\bs{c},\bs{y}\rangle=\langle\bs{c},(\sum_{i=1}^{n+1} m_i \bs{x}^i)\rangle=\sum_{i=1}^{n+1}m_i \langle\bs{c},\bs{x}^i\rangle=\sum_{i=1}^nm_i \langle\bs{c},\bs{x}\rangle^i+m_{n+1}\langle\bs{c},\bs{x}_{n+1}\rangle<\sum_{i=1}^nm_i \langle\bs{c},\bs{x}^i\rangle+m_{n+1}\langle\bs{c},\bs{y}\rangle\leq\sum_{i=1}^nm_i \langle\bs{c},\bs{y}\rangle+m_{n+1}\langle\bs{c},\bs{y}\rangle=\sum_{i=1}^{n+1}m_i \langle\bs{c},\bs{y}\rangle=\langle\bs{c},\bs{y}\rangle$
which is a contradiction.

Hence, for all $j$ with $m_j>0$ we have $\langle\bs{c},\bs{x}^j\rangle=\langle\bs{c}, \bs{y}\rangle$ so the maximising $\bs{x}$ is found in polynomial time. This contradicts the assumption that $P\neq NP$.

\section{Using Submodular Maximisation}\label{sm}

We consider the facility location special case (i.e, $\bv:=\boldsymbol{0}$ and all costs are positive). In this case it is possible to obtain our bound by selecting, on a trial $t$, each action $i$ with probability $\we{t}{i}$. We can then use some definitions in our paper to define the following functions on a trial~$t$:
\begin{itemize}
\item For all $j\in[n]$ define $\hat{f}^t_j(\gav):=$\\$\left(\rwd{t}{\s{t}{j}}-\rwd{t}{\s{t}{j+1}}\right)\left(1-\prod_{k=1}^j\left(1-\wv{\s{t}{k}}\right)\right)$
\item Define $\hat{h}^t(\gav)=\sum_{j=1}^n\hat{f}^t_j(\gav)-\langle\cv{t},\gav\rangle$
\end{itemize}
We can then use the arguments in our paper to write the expected profit as: $\ex{\lt{t}}=\hat{h}^t(\wev{t})$ which is continuous submodular. We can use a slight modification of our method of computing the gradient of $\tf{t}$ to compute the gradient of $\hat{h}^t$ in quasi-linear time. With this in hand we can then plug the gradient into one of the algorithms of \cite{Chen18}:
\begin{itemize}
\item The first algorithm requires the submodular function to be monotone and non-negative whereas ours, in general, is neither. Even if our expected profit was monotone and non-negative, for their algorithm to have a regret linear in $\sqrt{T}$ their per-trial time complexity bound becomes $\Omega(n\sqrt{T})$ which is much larger than ours.
\item In the gradient ascent based second algorithm, the submodular function also needs to be monotone and non-negative but the cost vector can be separated from the reward vector in the analysis allowing us to use the expected maximum reward which is monotone non-negative. The result of this analysis is similar to ours expect that for them $\alpha=1/2$ instead of the better $\alpha=1-e^{-1}$ that we have (NB: $\alpha$ is the discount on the comparator selection $\fp$).
\end{itemize}

\end{document}